\def\eqref#1{equation~\ref{#1}}
\def\1{\bm{1}}
\def\eps{{\epsilon}}
\def\vf{{\bm{f}}}
\def\vg{{\bm{g}}}
\def\vw{{\bm{w}}}
\def\vx{{\bm{x}}}
\def\vz{{\bm{z}}}
\DeclareMathAlphabet{\mathsfit}{\encodingdefault}{\sfdefault}{m}{sl}
\SetMathAlphabet{\mathsfit}{bold}{\encodingdefault}{\sfdefault}{bx}{n}
\newcommand{\R}{\mathbb{R}}
\newtheorem{theorem}{Theorem}[section]
\newtheorem{definition}[theorem]{Definition}
\newtheorem{proposition}[theorem]{Proposition}
\newtheorem{remark}[theorem]{Remark}
\newenvironment{proofsketch}{%
  \proof}{\endproof}
\definecolor{darkblue}{rgb}{0,0.08,0.5}
\definecolor{darkred}{rgb}{0.5,0.08,0}
\definecolor{forestgreen}{rgb}{0,0.08,0.5}
\title{Boosting the Certified Robustness of\\ L-infinity Distance Nets}
\author{Bohang Zhang$^{1}$ \qquad
Du Jiang$^1$ \qquad
Di He$^{1,2}$ \qquad
Liwei Wang$^{1,3}$\\[1ex]
${}^1$Key Laboratory of Machine Perception, MOE, School of Artificial Intelligence, Peking University\\${}^{2}$Microsoft Research\qquad ${}^3$International Center for Machine Learning Research, Peking University\\
\small \texttt{zhangbohang@pku.edu.cn} \quad \texttt{dujiang@pku.edu.cn}\quad\\
\small \texttt{di\_he@pku.edu.cn}\quad \texttt{wanglw@cis.pku.edu.cn}
}
\begin{document}

\maketitle

\vspace{-10pt}

\begin{abstract}
Recently, \citet{zhang2021towards} developed a new neural network architecture based on $\ell_\infty$-distance functions, which naturally possesses certified $\ell_\infty$ robustness by its construction. Despite the novel design and theoretical foundation, so far the model only achieved comparable performance to conventional networks. In this paper, we make the following two contributions: $\mathrm{(i)}$ We demonstrate that $\ell_\infty$-distance nets enjoy a fundamental advantage in certified robustness over conventional networks (under typical certification approaches); $\mathrm{(ii)}$ With an improved training process we are able to significantly boost the certified accuracy of $\ell_\infty$-distance nets. Our training approach largely alleviates the optimization problem that arose in the previous training scheme, in particular, the unexpected large Lipschitz constant due to the use of a crucial trick called \textit{$\ell_p$-relaxation}. The core of our training approach is a novel objective function that combines scaled cross-entropy loss and clipped hinge loss with a decaying mixing coefficient. Experiments show that using the proposed training strategy, the certified accuracy of $\ell_\infty$-distance net can be dramatically improved from 33.30\% to 40.06\% on CIFAR-10 ($\epsilon=8/255$), meanwhile outperforming other approaches in this area by a large margin. Our results clearly demonstrate the effectiveness and potential of $\ell_\infty$-distance net for certified robustness. Codes are available at \href{https://github.com/zbh2047/L\_inf-dist-net-v2}{https://github.com/zbh2047/L\_inf-dist-net-v2}.
\end{abstract}

\section{Introduction}
Modern neural networks, while achieving high accuracy on various tasks, are found to be vulnerable to small, adversarially-chosen perturbations of the inputs \citep{szegedy2013intriguing,biggio2013evasion}. Given an image $\vx$ correctly classified by a neural network, there often exists a small adversarial perturbation $\boldsymbol\delta$, such that the perturbed image $\vx + \boldsymbol\delta$ looks indistinguishable to $\vx$, but fools the network to predict an incorrect class with high confidence. Such vulnerability creates security concerns in many real-world applications. 

A large body of works has been developed to obtain robust classifiers. One line of works proposed heuristic approaches that are empirically robust to particular attack methods, among which adversarial training is the most successful approach \citep{goodfellow2014explaining,madry2017towards,zhang2019theoretically}. However, a variety of these heuristics have been subsequently broken by stronger and adaptive attacks \citep{carlini2017adversarial,athalye2018obfuscated,uesato2018adversarial,tramer2020adaptive,croce2020reliable}, and there are no formal guarantees whether the resulting model is truly robust. This motivates another line of works that seeks certifiably robust classifiers whose prediction is guaranteed to remain the same under all allowed perturbations. Representatives of this field use convex relaxation \citep{wong2018provable,mirman2018differentiable,gowal2018effectiveness,zhang2020towards} or randomized smoothing \citep{cohen2019certified,salman2019provably,zhai2019macer,yang2020randomized}. However, these approaches typically suffer from high computational cost, yet still cannot achieve satisfactory results for commonly used $\ell_\infty$-norm perturbation scenario.

Recently, \citet{zhang2021towards} proposed a fundamentally different approach by designing a new network architecture called $\ell_\infty$-distance net, a name coming from its construction that the basic neuron is defined as the $\ell_\infty$-distance function. Using the fact that any $\ell_\infty$-distance net is inherently a 1-Lipschitz mapping, one can easily check whether the prediction is certifiably robust for a given data point according to the output margin. The whole procedure only requires a forward pass without any additional computation. The authors further showed that the model family has strong expressive power, e.g., a large enough $\ell_\infty$-distance net can approximate any 1-Lipschitz function on a bounded domain. Unfortunately, however, the empirical model performance did not well reflect the theoretical advantages. As shown in \citet{zhang2021towards}, it is necessary to use a conventional multi-layer perception (MLP)\footnote{Without any confusion, in this paper, a \emph{conventional} neural network model is referred to as a network composed of linear transformations with non-linear activations.} on top of an $\ell_\infty$-distance net backbone to achieve better performance compared to the baseline methods. It makes both the training and the certification procedure complicated. More importantly, it calls into question whether the $\ell_\infty$-distance net is really a better model configuration than conventional architectures in the regime of certified robustness.

In this paper, we give an affirmative answer by showing that $\ell_\infty$-distance net \textit{itself} suffices for good performance and can be well learned using an improved training strategy. We first mathematically prove that under mild assumptions of the dataset, there exists an $\ell_\infty$-distance net with reasonable size \emph{by construction} that achieves perfect certified robustness. This result indicates the strong expressive power of $\ell_\infty$-distance nets in robustness certification, and shows a fundamental advantage over conventional networks under typical certification approaches (which do not possess such expressive power according to \citet{mirman2021fundamental}). However, it seems to contradict the previous empirical observations, suggesting that the model may fail to find an optimal solution and further motivating us to revisit the optimization process designed in \citet{zhang2021towards}.

Due to the non-smoothness of the $\ell_\infty$-distance function, \citet{zhang2021towards} developed several training tricks to overcome the optimization difficulty. A notable trick is called the $\ell_p$-relaxation, in which $\ell_p$-distance neurons are used during optimization to give a smooth approximation of $\ell_\infty$-distance. However, we find that the relaxation on neurons unexpectedly relaxes the Lipschitz constant of the network to an exponentially large value, making the objective function no longer maximize the robust accuracy and leading to sub-optimal solutions. 

We develop a novel modification of the objective function to bypass the problem mentioned above. The objective function is a linear combination of a scaled cross-entropy term and a modified clipped hinge term. The cross-entropy loss maximizes the output margin regardless of the model's Lipschitzness and makes optimization sufficient at the early training stage when $p$ is small. The clipped hinge loss then focuses on robustness for correctly classified samples at the late training phase when $p$ approaches infinity. The switch from cross-entropy loss to clipped hinge loss is reflected in the mixing coefficient, which decays to zero as $p$ grows to infinity throughout the training procedure.

Despite its simplicity, our experimental results show significant performance gains on various datasets. In particular, an $\ell_\infty$-distance net backbone can achieve \textbf{40.06\%} certified robust accuracy on CIFAR-10 ($\epsilon=8/255$). This goes far beyond the previous results, which achieved 33.30\% certified accuracy on CIFAR-10 using the same architecture \citep{zhang2021towards}. Besides, it surpasses the relaxation-based certification approaches by at least 5 points \citep{shi2021fast,lyu2021towards}, establishing a new state-of-the-art result.

To summarize, both the theoretical finding and empirical results in this paper demonstrate the merit of $\ell_\infty$-distance net for certified robustness. Considering the simplicity of the architecture and training strategy used in this paper, we believe there are still many potentials for future research of $\ell_\infty$-distance nets, and more generally, the class of Lipschitz architectures.

\section{Preliminary}

\label{sec_preliminary}
In this section, we briefly introduce the $\ell_\infty$-distance net and its training strategy.
An $\ell_\infty$-distance net is constructed using $\ell_\infty$-distance neurons as the basic component. The $\ell_\infty$-distance neuron $u$ takes vector $\vx$ as the input and calculates the $\ell_\infty$-norm distance between $\vx$ and parameter $\vw$ with a bias term $b$. The neuron can be written as
\begin{equation}
    u(\vx,\{\vw,b\})=\|\vx-\vw\|_\infty+b.
\end{equation}
Based on the neuron definition, a fully-connected $\ell_\infty$-distance net can then be constructed. Formally, an $L$ layer network $\vg$ takes $\vx^{(0)}=\vx$ as the input, and the $l$th layer $\vx^{(l)}$ is calculated by
\begin{equation}
\label{eq:def}
    x^{(l)}_i = u(\vx^{(l-1)},\{\vw^{(l, i)},b^{(l)}_i\})=\|\vx^{(l-1)}-\vw^{(l, i)}\|_\infty+b^{(l)}_i,\quad l\in [L],i\in [n_l].
\end{equation}
Here $n_l$ is the number of neurons in the $l$th layer. For $K$-class classification problems, $n_L=K$. The network outputs $\vg(\vx)=\vx^{(L)}$ as logits and predicts the class $\arg\max_{i\in [K]} [\vg(\vx)]_i$.

An important property of $\ell_\infty$-distance net is its Lipschitz continuity, as is stated below.

\begin{definition}
\normalfont A mapping $\vf(\vz): \R^m \rightarrow \R^n$ is called $\lambda$-Lipschitz with respect to $\ell_p$-norm $\|\cdot \|_p$, if for any $\vz_1, \vz_2$, the following holds:
\vspace{-3pt}
\begin{equation}
\|\vf(\vz_1)-\vf(\vz_2)\|_p\le \lambda \|\vz_1-\vz_2\|_p.
\end{equation}
\end{definition}
\begin{proposition}
\label{lipshitz_property}
The mapping of an $\ell_\infty$-distance layer is 1-Lipschitz with respect to $\ell_\infty$-norm. Thus by composition, any $\ell_\infty$-distance net $\vg(\cdot)$ is 1-Lipschitz with respect to $\ell_\infty$-norm.
\end{proposition}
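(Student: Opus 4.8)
The plan is to establish the single-layer claim first and then obtain the full-network claim by a routine composition argument. The crux of the single-layer case is the reverse triangle inequality for the $\ell_\infty$-norm: for any fixed parameter vector $\vw$ and any inputs $\vz_1,\vz_2$,
\begin{equation}
\bigl|\,\|\vz_1-\vw\|_\infty-\|\vz_2-\vw\|_\infty\,\bigr|\le \|(\vz_1-\vw)-(\vz_2-\vw)\|_\infty=\|\vz_1-\vz_2\|_\infty.
\end{equation}
This shows that each scalar map $\vz\mapsto u(\vz,\{\vw,b\})=\|\vz-\vw\|_\infty+b$ is $1$-Lipschitz with respect to $\|\cdot\|_\infty$, since the additive constant $b$ cancels in the difference.

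Next I would lift this coordinatewise bound to the whole layer. Writing $\vf$ for the mapping of an $\ell_\infty$-distance layer with parameters $\{\vw^{(i)},b_i\}_{i\in[n]}$, so that $[\vf(\vz)]_i=\|\vz-\vw^{(i)}\|_\infty+b_i$, we have
\begin{equation}
\|\vf(\vz_1)-\vf(\vz_2)\|_\infty=\max_{i\in[n]}\bigl|\,\|\vz_1-\vw^{(i)}\|_\infty-\|\vz_2-\vw^{(i)}\|_\infty\,\bigr|\le\max_{i\in[n]}\|\vz_1-\vz_2\|_\infty=\|\vz_1-\vz_2\|_\infty,
\end{equation}
which is exactly the statement that the layer is $1$-Lipschitz with respect to $\ell_\infty$-norm.

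Finally, for the network $\vg=\vg^{(L)}\circ\cdots\circ\vg^{(1)}$, I would argue by induction on the number of layers that a composition of $1$-Lipschitz (w.r.t.\ $\ell_\infty$) maps is again $1$-Lipschitz: if $\vh$ and $\vh'$ are each $1$-Lipschitz then $\|\vh'(\vh(\vz_1))-\vh'(\vh(\vz_2))\|_\infty\le\|\vh(\vz_1)-\vh(\vz_2)\|_\infty\le\|\vz_1-\vz_2\|_\infty$. Applying this $L-1$ times gives the claim for $\vg$. I do not expect any genuine obstacle here; the only point requiring care is making sure the relevant norm is the same ($\ell_\infty$) at the input and output of every layer, which holds because each hidden representation lives in a Euclidean space equipped with its $\ell_\infty$-norm, and the reverse triangle inequality is stated for precisely that norm.
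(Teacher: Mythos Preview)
Your argument is correct: the reverse triangle inequality gives the $1$-Lipschitz bound for each neuron, taking the coordinatewise maximum lifts it to the layer, and composition handles the full network. Note, however, that the paper does not actually supply a proof of this proposition; it is stated without proof in the preliminaries as a known property inherited from \citet{zhang2021towards}, so there is no paper-side argument to compare against---your proof is the standard one and would fill that gap cleanly.
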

$\ell_\infty$-distance nets naturally possess certified robustness using the Lipschitz property. In detail, for any data point $\vx$ with label $y$, denote the output margin of network $\vg$ as
\vspace{-2pt}
\begin{equation}
\label{eq:margin}
\mathsf{margin}(\vx,y;\vg)=[\vg(\vx)]_{y}-\max_{j\neq y}[\vg(\vx)]_j.
\vspace{-4pt}
\end{equation}
If $\vx$ is correctly classified by $\vg$, then the prediction of a perturbed input $\vx+\boldsymbol{\delta}$ will remain the same as $\vx$ if $\|\boldsymbol{\delta}\|_\infty<\mathsf{margin}(\vx,y;\vg)/2$. In other words, we can obtain the certified robustness for a given perturbation level $\epsilon$ according to $\mathbb I(\mathsf{margin}(\vx,y;\vg)/2>\epsilon)$, where $\mathbb I(\cdot)$ is the indicator function. We call this \emph{margin-based certification}. Given this certification approach, a corresponding training approach can then be developed, where one simply learns a large margin classifier using standard loss functions, e.g., hinge loss, without resorting to adversarial training. Therefore the whole training procedure is as efficient as training standard networks with \emph{no} additional cost.

\citet{zhang2021towards} further show that $\ell_\infty$-distance nets are Lipschitz-universal approximators. In detail, a large enough $\ell_\infty$-distance net can approximate any 1-Lipschitz function with respect to $\ell_\infty$-norm on a bounded domain arbitrarily well.

\textbf{Training $\ell_\infty$-distance nets.}
One major challenge in training $\ell_\infty$-distance net is that the $\ell_\infty$-distance operation is highly non-smooth, and the gradients (i.e. $\nabla_\vx \|\vx-\vw\|_\infty$ and $\nabla_\vw \|\vx-\vw\|_\infty$) are sparse. To mitigate the problem,  \citet{zhang2021towards} used $\ell_p$-distance neurons instead of $\ell_\infty$-distance ones during training, resulting in approximate and non-sparse gradients. Typically $p$ is set to a small value (e.g., 8) in the beginning and increases throughout training until it reaches a large number (e.g., 1000). The authors also designed several other tricks to further address the optimization difficulty. However, even with the help of tricks, $\ell_\infty$-distance nets only perform competitively to previous works. The authors thus considered using a hybrid model architecture, in which the $\ell_\infty$-distance net serves as a robust feature extractor, and an additional conventional multi-layer perceptron is used as the prediction head. This architecture achieves the best performance, but both the training and the certification approach become complicated again due to the presence of non-Lipschitz MLP layers.

\section{Expressive Power of $\ell_{\infty}$-distance Nets in Robust  Classification}
In this section, we challenge the conclusion of previous work by proving that simple $\ell_\infty$-distance nets (without the top MLP) suffice for achieving \textit{perfect} certified robustness in classification. Recall that \citet{zhang2021towards} already provides a universal approximation theorem, showing the expressive power of $\ell_{\infty}$-distance nets to represent Lipschitz functions. However, their result focuses on real-valued function approximation and is not directly helpful for certified robustness in classification. One may ask: Does a certifiably robust $\ell_\infty$-distance net exist \emph{given a dataset}? If so, how large does the network need to be? We will answer these questions and show that one can explicitly \textit{construct} an $\ell_\infty$-distance net that achieves perfect certified robustness as long as the dataset satisfies the following (weak) condition called $r$-separation \citep{yang2020closer}.

\vspace{3pt}
\begin{definition}
\normalfont ($r$-separation) Consider a labeled dataset $\mathcal D=\{(\vx_i,y_i)\}$ where $y_i\in [K]$ is the label of $\vx_i$. We say $\mathcal D$ is $r$-separated with respect to $\ell_p$-norm if for any pair of samples $(\vx_i,y_i),(\vx_j,y_j)$, as long as $y_i\neq y_j$, one has $\|\vx_i-\vx_j\|_p> 2r$.
\end{definition}

\begin{table}[ht]
    \centering
    \small
    \vspace{-5pt}
    \caption{The $r$-separation property of commonly used datasets, taken from \citet{yang2020closer}.}
    \vspace{-8pt}
    \label{tbl:separation}
    \begin{tabular}{ccc}
    \hline
         Dataset & $r$ & commonly used $\epsilon$ \\\hline
         MNIST &  0.369 & 0.3\\
         CIFAR-10 & 0.106 & 8/255\\
    \hline
    \end{tabular}
\end{table}

It is easy to see that $r$-separation is a \textit{necessary} condition for robustness under $\ell_p$-norm perturbation $\epsilon=r$. In fact, the condition holds for all commonly used datasets (e.g., MNIST, CIFAR-10): the value of $r$ in each dataset is much greater than the allowed perturbation level $\epsilon$ as is demonstrated in \citet{yang2020closer} (see Table \ref{tbl:separation} above). The authors took a further step and showed there must exist a classifier that achieves perfect robust accuracy if the condition holds. We now prove that even if we restrict the classifier to be the network function class represented by $\ell_\infty$-distance nets, the conclusion is still correct: a simple two-layer $\ell_\infty$-distance net with hidden size $O(n)$ can already achieve perfect robustness for $r$-separated datasets.

\begin{theorem}
\label{thm:robustness}
Let $\mathcal D$ be a dataset with $n$ elements satisfying the $r$-separation condition with respect to $\ell_\infty$-norm. Then there exists a two-layer $\ell_\infty$-distance net with hidden size $n$, such that when using margin-based certification, the certified $\ell_\infty$ robust accuracy is 100\% on $\mathcal D$ under perturbation $\epsilon=r$.
\end{theorem}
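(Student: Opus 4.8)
The plan is to build the network explicitly as a (soft) nearest-neighbour classifier. The first layer, which must have width $n$, I would let compute the $\ell_\infty$ distance from the input to each training point: take $\vw^{(1,i)}=\vx_i$ and $b^{(1)}_i=0$, so that $x^{(1)}_i=\|\vx-\vx_i\|_\infty=:h_i(\vx)$. For the output layer I would fix a constant $M>D+2r$, where $D:=\max_{i,i'}\|\vx_i-\vx_{i'}\|_\infty$ is the diameter of the dataset, and for each class $k$ set $w^{(2,k)}_i = M\cdot\mathbb{I}(y_i=k)$ and $b^{(2)}_k=0$, so that $[\vg(\vx)]_k = \max_i\big|h_i(\vx)-w^{(2,k)}_i\big|$.

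The one non-obvious ingredient is that an $\ell_\infty$-distance neuron can imitate ``a large constant minus a coordinate'' as long as its inputs stay bounded: if $M\ge t\ge 0$ then $|t-M|=M-t$. Consequently, on the coordinates with $y_i=k$ the $k$-th output neuron contributes $\max_{i:y_i=k}(M-h_i(\vx)) = M-\min_{i:y_i=k}h_i(\vx)$, a \emph{decreasing} function of the distance from $\vx$ to the nearest class-$k$ training point, while the coordinates with $y_i\ne k$ contribute only $\max_{i:y_i\ne k}h_i(\vx)$. This realises the nearest-neighbour structure; crucially it only has to behave well at the $n$ training points, since margin-based certification queries $\vg$ nowhere else, so the (poor) global behaviour of this construction is irrelevant.

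Next I would evaluate the margin at an arbitrary $(\vx_j,y_j)\in\mathcal D$. Since $h_i(\vx_j)=\|\vx_j-\vx_i\|_\infty\le D\le M$ for every $i$: for the true class, the coordinate $i=j$ gives $h_j(\vx_j)=0$, hence $[\vg(\vx_j)]_{y_j}=M$ (the term $M-0$ dominates $\max_{i:y_i\ne y_j}h_i(\vx_j)\le D$). For any $k\ne y_j$, every $i$ with $y_i=k$ satisfies $y_i\ne y_j$, so $r$-separation gives $h_i(\vx_j)>2r$; therefore $\max_{i:y_i=k}(M-h_i(\vx_j))<M-2r$, and also $\max_{i:y_i\ne k}h_i(\vx_j)\le D<M-2r$, whence $[\vg(\vx_j)]_k<M-2r$. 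Thus $\mathsf{margin}(\vx_j,y_j;\vg) = [\vg(\vx_j)]_{y_j}-\max_{k\ne y_j}[\vg(\vx_j)]_k > M-(M-2r)=2r$. By Proposition~\ref{lipshitz_property} the network is $1$-Lipschitz, so margin-based certification is sound, and since $\mathsf{margin}(\vx_j,y_j;\vg)/2>r=\epsilon$ for every $j$, all $n$ samples are certified: the certified $\ell_\infty$ robust accuracy on $\mathcal D$ is $100\%$.

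I expect the main obstacle to be conceptual rather than computational: an $\ell_\infty$-distance neuron is a convex, ``distance-like'' map, so it naively rewards being \emph{far} from its weight vector — the opposite of what a nearest-neighbour logit needs — and the resolution is the bounded-input identity $|t-M|=M-t$ with $M$ large, which flips the monotonicity on the relevant coordinates. Once that is in place the argument is elementary, reducing to the inequality $M-2r>D$ (choose $M$ above the dataset diameter) together with the definition of $r$-separation; the degenerate case $D=0$, in which $r$-separation forces a single class, is handled by taking any $M>2r$.
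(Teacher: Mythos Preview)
Your proof is correct and follows essentially the same construction as the paper: first layer stores the training points so that $x^{(1)}_i=\|\vx-\vx_i\|_\infty$, second layer uses weights $M\cdot\mathbb I(y_i=k)$ so that the $k$-th logit behaves like $M-\min_{i:y_i=k}\|\vx-\vx_i\|_\infty$, turning the network into a nearest-neighbour classifier whose margin at each training point exceeds $2r$. The only cosmetic differences are that the paper sets the output bias to $-C$ (you set it to $0$; this is a global shift and does not affect the margin) and chooses the large constant as $C=4\max_i\|\vx_i\|_\infty$ so that the ``same-class'' coordinates always dominate the $\max$, yielding the clean closed form $[\vg(\vx)]_k=-\min_{i:y_i=k}\|\vx-\vx_i\|_\infty$; you instead take $M>D+2r$ and bound the two groups of coordinates separately, which is equally valid and arguably makes the role of the constant more transparent.
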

\vspace{-8pt}
\begin{proofsketch}
Consider a two layer $\ell_\infty$-distance net $\vg$ defined in Equation (\ref{eq:def}). Let its parameters be assigned by
\vspace{-3pt}
\begin{equation*}
\begin{array}{*{20}{l}}
    &\vw^{(1,i)}=\vx_i,b^{(1)}_i=0 &\text{for }i\in [n]\\
    &w^{(2,j)}_i=C\cdot\mathbb I(y_i=j), b_j^{(2)}=-C\quad &\text{for }i\in [n],j\in [K]
\end{array}
\end{equation*}
where $C= 4\max_{i\in [n]}\|\vx_i\|_\infty$ is a constant, and $\mathbb I(\cdot)$ is the indicator function. For the above assignment, it can be proved that the network outputs
\vspace{-1pt}
\begin{equation}
    \label{eq:construct}
    \left[\vg(\vx)\right]_j=x^{(2)}_j=-\min_{i\in[n],y_i=j}\|\vx-\vx_i\|_\infty.
    \vspace{-2pt}
\end{equation}
From Equation (\ref{eq:construct}) the network $\vg$ can represent a nearest neighbor classifier, in that it outputs the negative of the nearest neighbor distance between input $\vx$ and the samples of each class. Therefore, given data $\vx=\vx_i$ in dataset $\mathcal D$, the output margin of $\vg(\vx)$ is at least $2r$ due to the $r$-separation condition. In other words, $\vg$ achieves 100\% certified robust accuracy on $\mathcal D$.
\end{proofsketch}
\vspace{-6pt}

\begin{remark}
\normalfont 
The above result can be extended to multi-layer networks. In general, we can prove the existence of such networks with $L$ layers and no more than $O( n/L +K+d)$ hidden neurons for each hidden layer where $d$ is the input dimension. See Appendix \ref{sec_proof} for details of the proof. 
\end{remark}

The significance of Theorem \ref{thm:robustness} can be reflected in the following two aspects. Firstly, our result explicitly shows the strong expressive power of $\ell_\infty$-distance nets in \emph{robust classification}, which complements the universal approximation theorem in \citet{zhang2021towards}. Moreover, Theorem \ref{thm:robustness} gives an upper bound of $O(n)$ on the required network size which is close to practical applications. It is much smaller than the size needed for function approximation ($O(1/\varepsilon^d)$ under approximation error $\varepsilon$, proved in \citet{zhang2021towards}), which scales exponentially in the input dimension $d$.

Secondly, our result justifies that \emph{for well-designed architectures}, using only the \textit{global} Lipschitz property is \emph{sufficient} for robustness certification. It contrasts to the prior view that suggests leveraging the \textit{local} Lipschitz constant is necessary \citep{huster2018limitations}, which typically needs sophisticated calculations \citep{wong2018scaling,zhang2018efficient,zhang2020towards}. More importantly, as a comparison, \citet{mirman2021fundamental} very recently proved that for any conventional network, the commonly-used  \emph{interval bound propagation} (IBP) \citep{mirman2018differentiable,gowal2018effectiveness} intrinsically cannot achieve perfect certified robustness on a simple $r$-separation dataset containing only three data points (under $\epsilon=r$). In other words, $\ell_\infty$-distance nets certified using the global Lipschitz property have a fundamental advantage over conventional networks certified using interval bound propagation.

\section{Investigating the Training of $\ell_\infty$-distance Nets}
Since robust $\ell_\infty$-distance nets exist in principle, the remaining thing is understanding why the current training strategy cannot find a robust solution. In this section, we first provide evidence that the training method in \citet{zhang2021towards} is indeed problematic and cannot achieve good certified robustness, then provide a novel way to address the issue.

\subsection{Problems of the Current Training Strategy}
\label{sec_training_problem}
As shown in Section \ref{sec_preliminary}, given any perturbation level $\epsilon$, the certified accuracy of data point $\vx$ can be calculated according to $\mathbb I(\mathsf{margin}(\vx,y;\vg)/2>\epsilon)$ for 1-Lipschitz functions. Then the hinge loss becomes standard to learn a robust $\ell_\infty$-distance net:
\begin{equation}
\label{eq:hinge}
    \mathcal L_{\text{hinge}}(\vg,\mathcal D;\theta)=\mathbb E_{(\vx_i,y_i)\in\mathcal D}\left[\max\left\{\theta-\mathsf{margin}(\vx_i,y_i;\vg), 0\right\}\right],
\end{equation}
where $\theta$ is the hinge threshold which should be larger than $2\epsilon$. Hinge loss aims at making the output margin for any sample greater than $\theta$, and $\mathcal L_{\text{hinge}}(\vg,\mathcal D;\theta)=0$ if and only if the network $\vg$ achieves perfect certified robustness on training dataset under perturbation $\epsilon=\theta/2$.

\begin{figure}[t]
    \small
    \centering
    \setlength{\tabcolsep}{25pt}
    \begin{tabular}{c c}
        \includegraphics[width=0.332\textwidth]{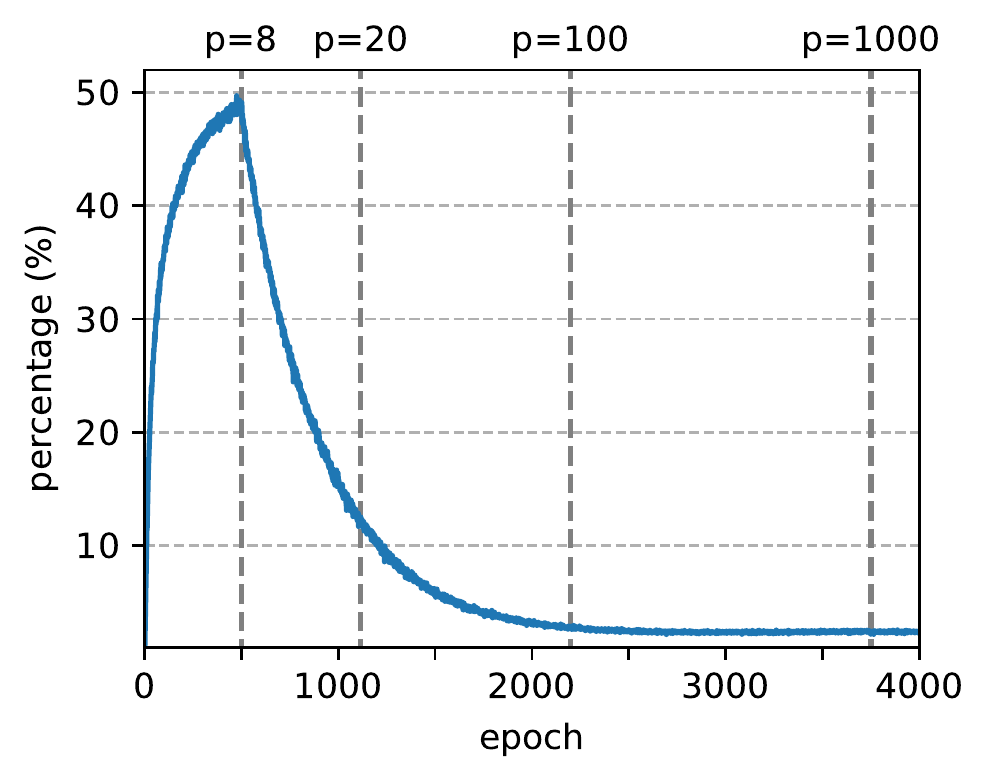} & \includegraphics[width=0.32\textwidth]{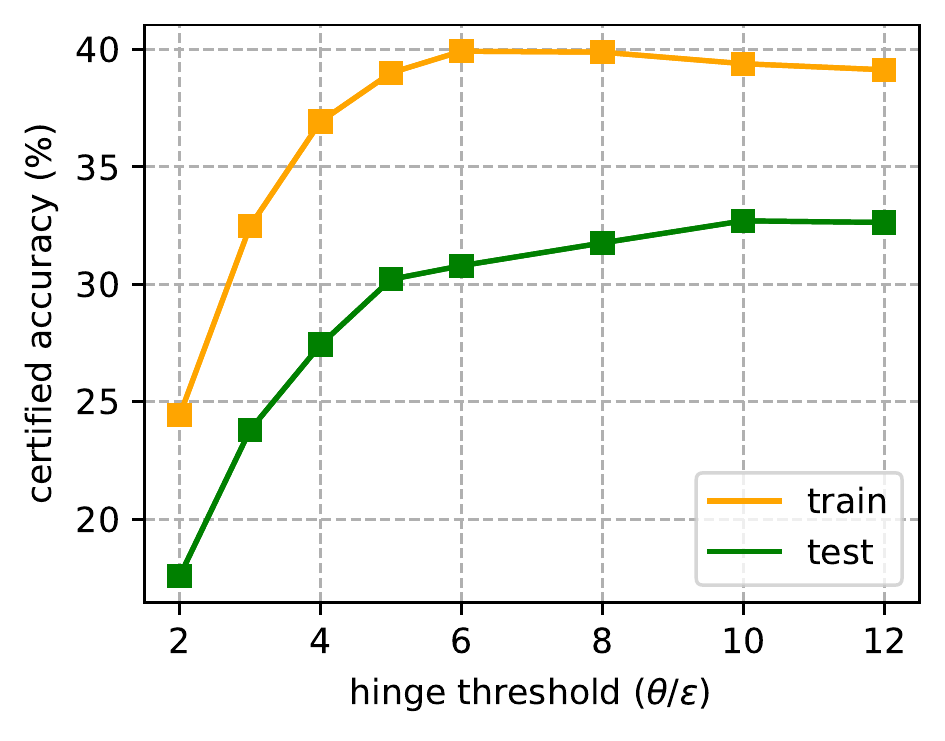} \\
        (a) & (b)
    \end{tabular}
    \vspace{-6pt}
    
    \caption{\small Experiments of $\ell_\infty$-distance net training on CIFAR-10 dataset using the hinge loss function. Training details can be found in Section \ref{sec_exp_setting}. (a) The percentage of training samples with output margin greater than $\theta$ throughout training. We use very long training epochs, and the final percentage is still below 2.5\%. The dashed lines indicate different $p$ values of $\ell_p$-distance neurons. (b) The certified accuracy on training dataset and test dataset respectively, trained using different hinge threshold $\theta$. Training gets worse when $\theta\le 6\epsilon$.}
    \label{fig:logit_margin}
    \vspace{-8pt}
\end{figure}

\paragraph{Hinge loss fails to learn robust classifiers.} 
We first start with some empirical observations. Consider a plain $\ell_\infty$-distance net trained on CIFAR-10 dataset using the same approach and hyper-parameters provided in \citet{zhang2021towards}. When the training finishes, we count the percentage of training samples whose margin is greater than $\theta$, i.e., achieving zero loss. We expect the value to be large if the optimization is successful. However, the result surprisingly reveals that only 1.62\% of the training samples are classified correctly with a margin greater than $\theta$. Even if we use much longer training epochs (e.g. 4000 epochs on CIFAR-10), the percentage is still below 2.5\% (see Figure \ref{fig:logit_margin}(a)). Thus we conclude that hinge loss fails to optimize well for most of the training samples.

Since the output margins of the vast majority of training samples are less than $\theta$, the loss approximately degenerates to a linear function without the maximum operation:
\begin{equation}
\label{eq:hinge2}
\begin{aligned}
    \mathcal L_{\text{hinge}}(\vg,\mathcal D;\theta)\approx  \widetilde{\mathcal L}_{\text{hinge}}(\vg,\mathcal D;\theta)&=\mathbb E_{(\vx_i,y_i)\in\mathcal D}\left[\theta-\mathsf{margin}(\vx_i,y_i;\vg)\right]\\
    &=\theta - \mathbb E_{(\vx_i,y_i)\in\mathcal D}\left[\mathsf{margin}(\vx_i,y_i;\vg)\right],
\end{aligned}
\end{equation}
where $\theta$ becomes irrelevant, and training becomes to optimize \emph{the average margin over the dataset} regardless of the allowed perturbation $\epsilon$ which is definitely problematic.

After checking the hyper-parameters used in \citet{zhang2021towards}, we find that the hinge threshold $\theta$ is set to 80/255, which is \emph{much larger} than the perturbation level $\epsilon=8/255$. This partly explains the above degeneration phenomenon, but it is still unclear why such a large value has to be taken. We then conduct experiments to see the performance with different chosen hinge thresholds $\theta$. The results are illustrated in Figure \ref{fig:logit_margin}(b). As one can see, a smaller hinge threshold not only reduces certified test accuracy but even makes \emph{training} worse.

\paragraph{Why does this happen?} We find that the reason for the loss degeneration stems from the $\ell_p$-relaxation used in training. While $\ell_p$-relaxation alleviates the sparse gradient problem, it destroys the Lipschitz property of the $\ell_\infty$-distance neuron, as stated in Proposition \ref{ell_p_lipschitz}:
\begin{proposition}
\label{ell_p_lipschitz}
A layer constructed using $\ell_p$-distance neurons
\vspace{-3pt}
\begin{equation}
    u_p(\vx,\{\vw,b\})=\|\vx-\vw\|_p+b
\vspace{-3pt}
\end{equation}
is $d^{1/p}$ Lipschitz with respect to $\ell_\infty$-norm, where $d$ is the dimension of $\vx$. Thus by composition, an $L$ layer $\ell_p$-distance net is $d^{L/p}$ Lipschitz with respect to $\ell_\infty$-norm.
\end{proposition}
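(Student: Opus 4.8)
The plan is to prove the two claims of Proposition~\ref{ell_p_lipschitz} in order: first the single-layer bound, then the composition bound.

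\textbf{Step 1: Lipschitzness of a single $\ell_p$-distance neuron.} I would start by showing that for fixed $\vw,b$, the scalar function $\vx\mapsto \|\vx-\vw\|_p+b$ is $d^{1/p}$-Lipschitz from $(\R^d,\|\cdot\|_\infty)$ to $(\R,|\cdot|)$. By the reverse triangle inequality, $\bigl|\,\|\vx_1-\vw\|_p-\|\vx_2-\vw\|_p\,\bigr|\le \|\vx_1-\vx_2\|_p$, so it suffices to bound the $\ell_p$-norm by the $\ell_\infty$-norm on $\R^d$. This is the elementary inequality $\|\vz\|_p\le d^{1/p}\|\vz\|_\infty$, which follows because each of the $d$ coordinates of $\vz$ has absolute value at most $\|\vz\|_\infty$, hence $\sum_{k=1}^d |z_k|^p \le d\,\|\vz\|_\infty^p$ and taking $p$-th roots gives the claim. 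The bias $b$ cancels in the difference, so it does not affect the Lipschitz constant.

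\textbf{Step 2: From a single neuron to a layer.} A layer maps $\R^d\to\R^{n}$ by stacking $n$ such neurons, so its output is $\vx\mapsto (u_p(\vx,\{\vw^{(i)},b_i\}))_{i=1}^{n}$. The Lipschitz constant with respect to $\ell_\infty$-norm on both sides is the maximum over coordinates of the per-coordinate Lipschitz constants: $\|\vf(\vx_1)-\vf(\vx_2)\|_\infty = \max_i |u_p(\vx_1,\cdot)-u_p(\vx_2,\cdot)| \le \max_i d^{1/p}\|\vx_1-\vx_2\|_\infty = d^{1/p}\|\vx_1-\vx_2\|_\infty$, using Step~1 coordinatewise. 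Hence the layer is $d^{1/p}$-Lipschitz.

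\textbf{Step 3: Composition over $L$ layers.} For the final claim I would invoke the standard fact that the composition of a $\lambda_1$-Lipschitz and a $\lambda_2$-Lipschitz map (with respect to a fixed norm on each intermediate space, here always $\ell_\infty$) is $\lambda_1\lambda_2$-Lipschitz. Strictly, one must be a little careful that the dimension $d$ in Proposition~\ref{ell_p_lipschitz} refers to the input dimension of each layer, and the input dimension of layer $l$ is $n_{l-1}$, so the honest bound is $\prod_{l=1}^L n_{l-1}^{1/p}$; the stated $d^{L/p}$ follows under the (implicit) convention that all layers have the same width $d$, or as an upper bound if $d$ is taken to be the maximal width. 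I would note this and then conclude by induction on $L$.

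None of the steps is a genuine obstacle — the proof is a sequence of elementary norm inequalities. The only point requiring mild care is the bookkeeping in Step~3 about what ``$d$'' means per layer, i.e., making the composition bound $d^{L/p}$ precise when widths may differ; I would address this with a one-line remark about the width convention rather than a calculation.
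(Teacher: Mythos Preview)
Your argument is correct and is the natural way to establish the proposition: reverse triangle inequality, the norm comparison $\|\vz\|_p\le d^{1/p}\|\vz\|_\infty$, coordinatewise maximum for the layer, and multiplicativity under composition. The paper itself states Proposition~\ref{ell_p_lipschitz} without proof, so there is no alternative argument to compare against; your remark about the per-layer width convention behind the $d^{L/p}$ bound is a fair clarification that the paper leaves implicit.
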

\cite{zhang2021towards} uses a small $p$ in the beginning and increases its value gradually during training. From Proposition \ref{ell_p_lipschitz}, the Lipschitz constant of the smoothed network can be significantly large\footnote{For a 6-layer $\ell_p$-distance net ($p=8$) with hidden size 5120 used in \citet{zhang2021towards}, Proposition \ref{ell_p_lipschitz} approximately gives a Lipschitz constant of 568. We also run experiments to validate such upper bound is relatively tight. See Appendix \ref{sec_lip} for more details.} at the early training stage. Note that the robustness certification $\mathbb I(\mathsf{margin}(\vx,y;\vg)/2\leq\epsilon)$ holds for 1-Lipschitz functions. When the Lipschitz constant is large, even if the margin of a data point passes the threshold, the data point can still lie near to classification boundary. This makes the training using hinge loss ineffective at the early stage and converge to a wrong solution far from the real optima. We argue that the early-stage training is important as when $p$ rises to a large value, the optimization becomes intrinsically difficult to push the parameters back to the correct solution due to sparse gradients.

Such argument can be verified from Figure \ref{fig:logit_margin}(a), in which we plot the percentage of training samples with margins greater than $\theta$ throughout a long training process. After $p$ starts to rise, the margin decreases drastically, and the percentage sharply drops and never increases again during the whole $\ell_p$-relaxation procedure. It is also clear why the value of $\theta$ must be chosen to be much larger than $2\epsilon$. For small $\theta$, the margin optimization becomes insufficient at early training stages when the Lipschitz constant is exponentially large, leading to worse performance \emph{even on the training dataset}.

\subsection{Our Solution}
\label{sec_loss}
In the previous section, one can see that the hinge loss and the $\ell_p$-relaxation are incompatible. As the $\ell_p$-relaxation is essential to overcome the sparse gradient problem, we focus on developing better loss functions. We show in this section that a simple change of the objective function can address the above problem, leading to non-trivial improvements.

We approach the issue by investigating the commonly used cross-entropy loss. For conventional non-Lipschitz networks, cross-entropy loss aims at increasing the logit of the true class while decreasing the other logits as much as possible, therefore enlarges the output margin without a threshold constraint. This makes the optimization of output margin sufficient regradless of the Lipschitz constant, which largely alleviates the problem in Section \ref{sec_training_problem} for training $\ell_p$-distance nets with small $p$. Such findings thus motivate us to replace hinge loss with cross-entropy. However, on the other hand, cross-entropy loss only coarsely enlarges the margin, rather than exactly optimizing the surrogate of certified accuracy $\mathbb I(\mathsf{margin}(\vx,y;\vg)/2\leq\epsilon)$ that depends on $\epsilon$. When the model is almost 1-Lipschitz (i.e. $p$ approaches infinity), hinge loss can still be better than cross-entropy. In other words, cross-entropy loss and hinge loss are complementary.

Based on the above argument, we propose to simply combine cross-entropy loss and hinge loss to obtain the following objective function:
\begin{equation}
\label{eq:loss}
    \mathcal L(\vg,\mathcal D;\theta)=\mathbb E_{(\vx_i,y_i)\in\mathcal D}[\underbrace{\lambda\ell_{\text{CE}}(s\cdot \vg(\vx_i),y_i)}_{\text{scaled cross-entropy loss}}+\underbrace{\min\{\ell_{\text{hinge}}(\vg(\vx_i)/\theta,y_i), 1\}}_{\text{clipped hinge loss}}]
\end{equation}
where $\ell_{\text{CE}}(\vz,y)=\log(\sum_i\exp(z_i))-z_y$ and $\ell_{\text{hinge}}(\vz,y)=\max\{\max_{i\neq y} z_i-z_y+1, 0\}$. We now make detailed explanations about each term in Equation (\ref{eq:loss}).

\textbf{Scaled cross-entropy loss} $\ell_{\text{CE}}(s\cdot \vg(\vx_i),y_i)$. Cross-entropy loss deals with the optimization issue when $p$ is small. Here a slight difference is the introduced scaling $s$ as is explained below. We know cross-entropy loss is invariant to the shift operation (adding a constant to each output logit) but not scaling (multiplying a constant). For conventional networks, the output logits are produced through the last linear layer, and the scaling factor can be implicitly learned in the parameters of the linear layer to match the cross-entropy loss. However, $\ell_\infty$-distance net is strictly 1-Lipschitz and does not have any scaling operation. We thus introduce a learnable scalar (temperature) $s$ that multiplies the network output $\vg (\vx_i)$ before taking cross-entropy loss.  We simply initialize it to be 1. Note that $s$ does not influence  the classification results and can be removed once the training finishes.

\textbf{Clipped hinge loss} $\min\{\ell_{\text{hinge}}(\vg(\vx_i)/\theta,y_i), 1\}$. Clipped hinge loss is designed to achieve robustness when $p$ approaches infinity. Unlike the standard hinge loss, the clipped version plateaus if the output margin is negative (i.e., misclassified). In other words, clipped hinge loss is equivalent to applying hinge loss only on correctly-classified samples. The reason for applying such a clipping is three-fold. $(\mathrm{i})$ Scaled cross-entropy loss already focuses on learning a model with high (clean) accuracy, thus there is no need to optimize on wrongly-classified samples duplicatively using hinge loss. Moreover, cross-entropy is better than hinge loss when used in classification, as the gradient of hinge loss makes optimization harder\footnote{The gradient of hinge loss w.r.t. output logits is sparse (only two non-zero elements) and does not make full use of the information provided by the logit.}. $(\mathrm{ii})$ In the late training phase, the optimization becomes intrinsically difficult ($p$ approaching infinity). As a consequence, wrongly classified samples may have little chance to be robust. Clipped hinge loss thus ignores these samples and concentrates on easier ones to increase their potential to be robust after training. $(\mathrm{iii})$ The clipped hinge loss is a better surrogate for 0-1 robust error $\mathbb I(\mathsf{margin}(\vx,y;\vg)/2\leq\epsilon)$. Compared with hinge loss, the clipped version is closer to our goal and more likely to achieve better certified accuracy.  Finally, due to the presence of cross-entropy loss, we will show in Section \ref{sec_ablation} that the hinge threshold $\theta$ can be set to a much smaller value unlike Figure \ref{fig:logit_margin}(b), which thus avoids the loss degeneration problem.

\textbf{The mixing coefficient $\lambda$}. The coefficient $\lambda$ in loss (\ref{eq:loss}) plays a role in the trade-off between cross-entropy and hinge loss. Based on the above motivation, we use a decaying $\lambda$ that attenuates from $\lambda_0$ to zero throughout the process of $\ell_p$-relaxation ($\lambda_0$ is a hyper-parameter). When $p$ is small at the early training stage, we focus more on cross-entropy loss. After $p$ grows large, $\lambda$ vanishes, and a surrogate of 0-1 robust error is optimized.

We point out that objective functions similar to (\ref{eq:loss}) also appeared in previous literature. In particular, the TRADES loss \citep{zhang2019theoretically} and MMA loss \citep{ding2019mma} are both composed of a mixture of the cross-entropy loss and a form of robust loss. Nevertheless, the motivations of these methods are quite different. For example, TRADES was proposed based on the theoretical results suggesting robustness may be at odds with accuracy \citep{tsipras2018robustness}, while our training approach is mainly motivated by the optimization issue. Furthermore, the implementations of these methods also vary a lot. We use clipped hinge loss to achieve robustness due to its simplicity, and uses a decaying $\lambda$ correlate to $p$ in $\ell_p$-relaxation due to the optimization problem in Section \ref{sec_training_problem}.

\section{Experiments}
\label{sec_exp}
\subsection{Experimental Setting}
\label{sec_exp_setting}
In this section, we evaluate the proposed training strategy on benchmark datasets MNIST and CIFAR-10 to show the effectiveness of $\ell_\infty$-distance net.

\textbf{Model details.} We use exactly the same model as \citet{zhang2021towards} for a fair comparison. Concretely, we consider the simple fully-connect $\ell_\infty$-distance nets defined in Equation (\ref{eq:def}). All hidden layers have 5120 neurons. We use a 5-layer network for MNIST and a 6-layer one for CIFAR-10.

\textbf{Training details.} In all experiments, we choose the Adam optimizer with a batch size of 512. The learning rate is set to 0.03 initially and decayed using a simple cosine annealing throughout the whole training process. We use padding and random crop data augmentation for MNIST and CIFAR-10, and also use random horizontal flip for CIFAR-10. The $\ell_p$-relaxation starts at $p=8$ and ends at $p=1000$ with $p$ increasing exponentially. Accordingly, the mixing coefficient $\lambda$ decays exponentially during the $\ell_p$-relaxation process from $\lambda_0$ to a vanishing value $\lambda_{\text{end}}$. We do not use further tricks that are used in \citet{zhang2021towards}, e.g. the $\ell_p$ weight decay or a warmup over perturbation $\epsilon$, to keep our training strategy clean and simple. The dataset dependent hyper-parameters, including $\theta$, $\lambda_0$, $\lambda_{\text{end}}$ and the number of epochs $T$, can be found in Appendix \ref{sec_exp_detail}. All experiments are run for 8 times on a single NVIDIA Tesla-V100 GPU, and the median of the performance number is reported.

\textbf{Evaluation.} We test the robustness of the trained models under $\epsilon$-bounded $\ell_\infty$-norm perturbations. Following the common practice \citep{madry2017towards}, we mainly use $\epsilon=0.3$ for MNIST dataset and $8/255$ for CIFAR-10 dataset. We also provide results under other perturbation magnitudes, e.g. $\epsilon=0.1$ for MNIST and $\epsilon=2/255,\epsilon=16/255$ for CIFAR-10.  We first evaluate the robust test accuracy under the Projected Gradient Descent (PGD) attack \citep{madry2017towards}. The number of iterations of the PGD attack is set to a large number of 100. We then calculate the certified robust accuracy based on the output margin.

\begin{table}[t]
\begin{center}
\caption{Comparison of our results with existing methods.}
\fontsize{8}{9}\selectfont
\vspace{-8pt}
\label{tbl:results}
\begin{tabular}{c|c|cc|lcc}
\hline
Dataset                    & $\epsilon$              & Method    & Reference & Clean & PGD   & Certified \\ \hline
\multirow{14}{*}{MNIST}
 & \multirow{7}{*}{0.1}
  & CAP       & \citep{wong2018scaling}                   & 98.92 & -     & 96.33     \\
 && IBP$^*$   & \citep{gowal2018effectiveness}            & 98.92 & 97.98 & 97.25     \\
 && CROWN-IBP & \citep{zhang2020towards}                  & 98.83 & 98.19 & 97.76     \\
 && IBP       & \citep{shi2021fast}                       & 98.84 & -     & \textbf{97.95}     \\
 && COLT      & \citep{balunovic2020Adversarial}          & 99.2  & -     & 97.1$^\|$      \\
 \cline{3-7} 
 && $\ell_\infty$-distance Net$^\dagger$ & \citep{zhang2021towards} & 98.66 & 97.79$^\ddag$ & 97.70 \\
 && $\ell_\infty$-distance Net & This paper               & 98.93 & 98.03 & \textbf{97.95}     \\
 \cline{2-7} 
 & \multirow{7}{*}{0.3}
  & IBP$^*$   & \citep{gowal2018effectiveness}            & 97.88 & 93.22 & 91.79     \\
 && CROWN-IBP & \citep{zhang2020towards}                  & 98.18 & 93.95 & 92.98     \\
 && IBP       & \citep{shi2021fast}                       & 97.67 & -     & \textbf{93.10}     \\
 && COLT      & \citep{balunovic2020Adversarial}          & 97.3  & -     & 85.7$^\|$      \\
 && $\ell_\infty$-distance Net+MLP&\citep{zhang2021towards}& 98.56 & 95.28$^\ddag$ & 93.09     \\
 \cline{3-7} 
 && $\ell_\infty$-distance Net & \citep{zhang2021towards} & 98.54 & 94.71$^\ddag$ & 92.64     \\
 && $\ell_\infty$-distance Net & This paper               & 98.56 & 94.73 & \textbf{93.20}     \\ \hline
\multirow{24}{*}{CIFAR-10}
 & \multirow{8}{*}{2/255}
  & CAP       & \citep{wong2018scaling}                   & 68.28 & -     & 53.89     \\
 && IBP$^*$   & \citep{gowal2018effectiveness}            & 61.46 & 50.28 & 44.79     \\
 && CROWN-IBP & \citep{zhang2020towards}                  & 71.52 & 59.72 & 53.97     \\
 && IBP       & \citep{shi2021fast}                       & 66.84 & -     & 52.85     \\
 && COLT      & \citep{balunovic2020Adversarial}          & 78.4  & -     & 60.5$^\|$      \\
 && Randomized Smoothing & \citep{blum2020random}         & 78.8  & -     & \textbf{62.6}$^{\S\|}$      \\
 \cline{3-7} 
 && $\ell_\infty$-distance Net$^\dagger$ & \citep{zhang2021towards} & 60.33 & 51.45$^\ddag$ & 50.94     \\
 && $\ell_\infty$-distance Net & This paper               &  60.61 & 54.28 & \textbf{54.12}     \\ \cline{2-7} 
 & \multirow{11}{*}{8/255}
  & IBP$^*$   & \citep{gowal2018effectiveness}            & 50.99 & 31.27 & 29.19     \\
 && CROWN-IBP & \citep{zhang2020towards}                  & 45.98 & 34.58 & 33.06     \\
 && CROWN-IBP & \citep{xu2020automatic}                   & 46.29 & 35.69 & 33.38     \\
 && IBP       & \citep{shi2021fast}                       & 48.94 & -     & 34.97     \\
 && CROWN-LBP & \citep{lyu2021towards}                    & 48.06 & 37.95 & 34.92     \\
 && COLT      & \citep{balunovic2020Adversarial}          & 51.7  & -  & 27.5$^\|$      \\
 && Randomized Smoothing & \citep{salman2019provably}     & 53.0    & -     & 24.0$^{\S\|}$   \\
 && Randomized Smoothing & \citep{jeong2020consistency}   & 52.3  & -     & 25.2$^{\S\|}$  \\
 && $\ell_\infty$-distance Net+MLP&\citep{zhang2021towards}& 50.80 & 37.06$^\ddag$ & \textbf{35.42}     \\ \cline{3-7} 
 && $\ell_\infty$-distance Net & \citep{zhang2021towards} & 56.80 & 37.46$^\ddag$ & 33.30     \\
 && $\ell_\infty$-distance Net & This paper               & 54.30 & 41.84 & \textbf{40.06}    \\
 \cline{2-7} 
 & \multirow{5}{*}{16/255}
  & IBP$^*$   & \citep{gowal2018effectiveness}            & 31.03 & 23.34 & 21.88     \\
 && CROWN-IBP & \citep{zhang2020towards}                  & 33.94 & 24.77 & 23.20     \\
 && IBP       & \citep{shi2021fast}                       & 36.65 & -     & \textbf{24.48}     \\ \cline{3-7} 
 && $\ell_\infty$-distance Net$^\dagger$ & \citep{zhang2021towards} & 55.05 & 26.02$^\ddag$ & 19.28    \\
 && $\ell_\infty$-distance Net & This paper               & 48.50 & 32.73 & \textbf{29.04}    \\ \hline
\end{tabular}
\end{center}
{
\begin{spacing}{0.8}
\fontsize{8}{0} \selectfont 
$^*$ The IBP results are obtained from \citet{zhang2020towards}.
\\$^\dagger$ These results are obtained by running the code in the authors' github. See Appendix \ref{sec_exp_detail} for details.
\\$^\ddag$ The number of PGD steps is chosen as 20 in \citet{zhang2021towards}.
\\$^\S$ These methods provide probabilistic certified guarantees.
\\$^\|$ Calculating certified accuracy for these methods takes several \textit{days} on a single GPU, which is 4 to 6 orders of magnitude slower than other methods.
\end{spacing}
}
\vspace{-3pt}
\end{table}

\subsection{Experimental Results}
Results are presented in Table \ref{tbl:results}. For each method in the table, we report the clean test accuracy without perturbation (denoted as Clean), the robust test accuracy under PGD attack (denoted as PGD), and the certified robust test accuracy (denoted as Certified). We also compare with randomized smoothing (see Appendix \ref{sec_rs_result}), despite these methods provides probabilistic certified guarantee and usually take thousands of times more time than other approaches for robustness certification.

\textbf{Comparing with \citet{zhang2021towards}.} It can be seen that for all perturbation levels $\epsilon$ and datasets, our proposed training strategy improves the performance of $\ell_\infty$-distance nets. In particular, we boost the certified accuracy on CIFAR-10 from 33.30\% to 40.06\% under $\epsilon=8/255$, and from 19.28\% to 29.04\% under a larger $\epsilon=16/255$. Note that we use exactly the same architecture as \citet{zhang2021towards}, and a larger network with better architecture may further improve the results. Another observation from Table \ref{tbl:results} is that the improvement of our proposed training strategy gets more prominent with the increase of $\epsilon$. This is consistent with our finding in Section \ref{sec_training_problem}, in that the optimization is particularly insufficient for large $\epsilon$ using hinge loss, and in this case our proposed objective function can significantly alleviate the problem.

\textbf{Comparing with other certification methods.} For most settings in Table \ref{tbl:results}, our results establish new state-of-the-arts over previous baselines, despite we use the margin-based certification which is \textit{much simpler}. The gap is most noticeable for $\epsilon=8/255$ on CIFAR-10, where we surpass recent relaxation-based approaches by more than 5 points \citep{shi2021fast,lyu2021towards}. 
It can also be observed that $\ell_\infty$-distance net is most suitable for the case when $\ell_\infty$ perturbation is relatively large. This is not surprising since Lipschitz property is well exhibited in this case. If $\epsilon$ is vanishingly small (e.g. 2/255), the advantage of the Lipschitz property will not be well-exploited and $\ell_\infty$-distance net will face more optimization and generalization problems compared with conventional networks.

\subsection{Investigating the Proposed Training Strategy}

\begin{figure}[]
    \small
    \centering
    \vspace{-5pt}
    \setlength\tabcolsep{0pt}
    \begin{tabular}{ccc}
        \includegraphics[width=0.332\textwidth]{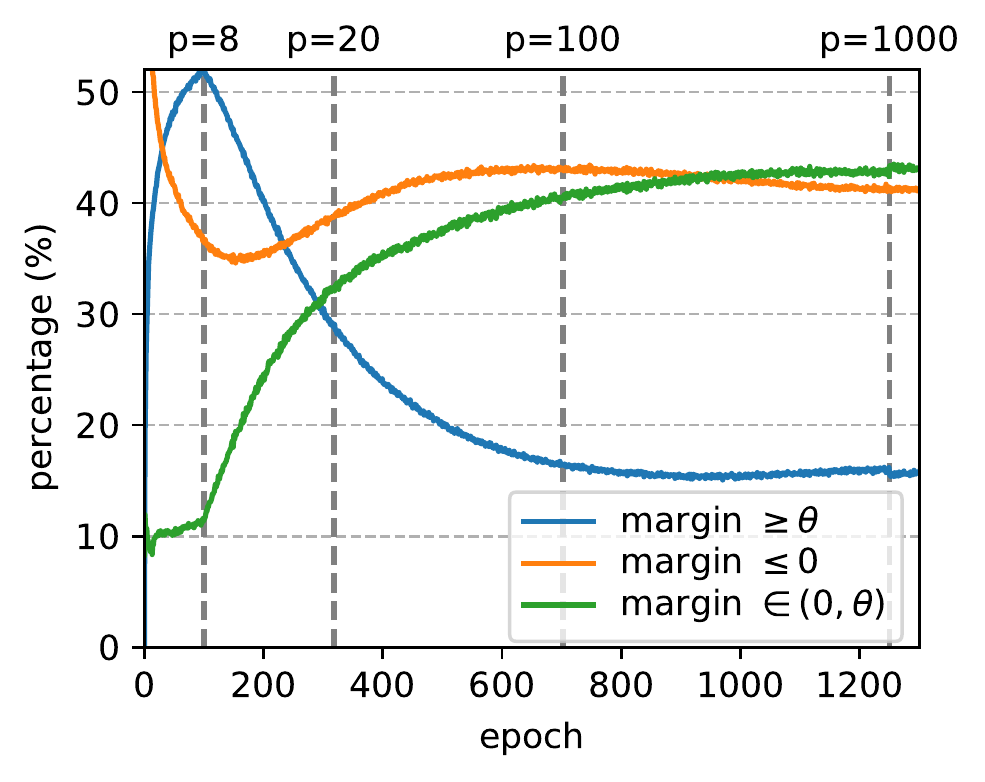} & \includegraphics[width=0.32\textwidth]{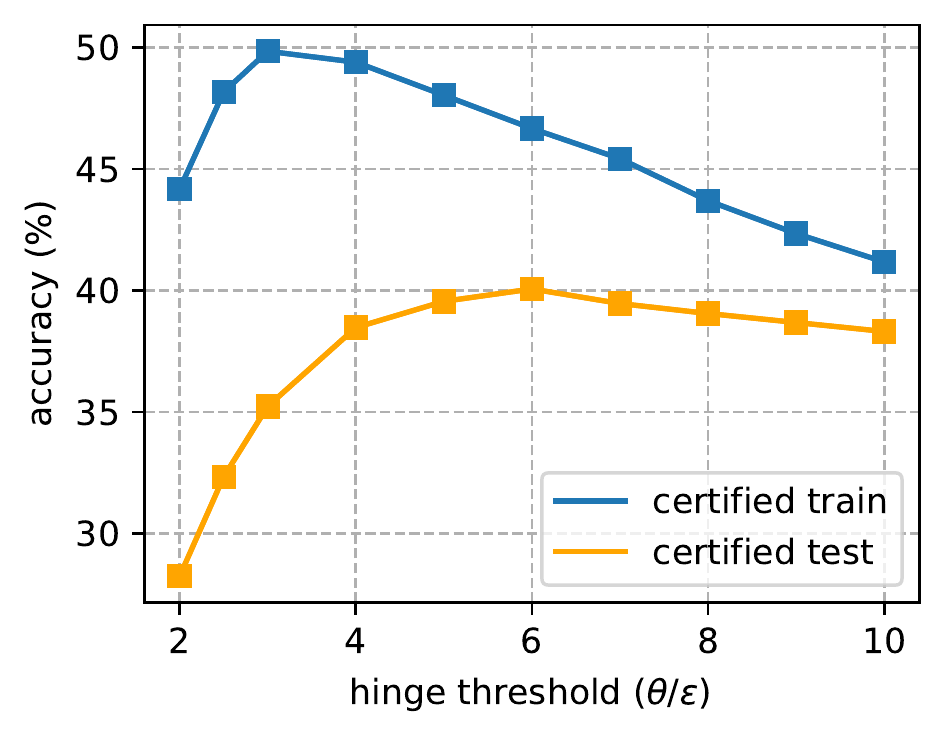} &
        \includegraphics[width=0.355\textwidth]{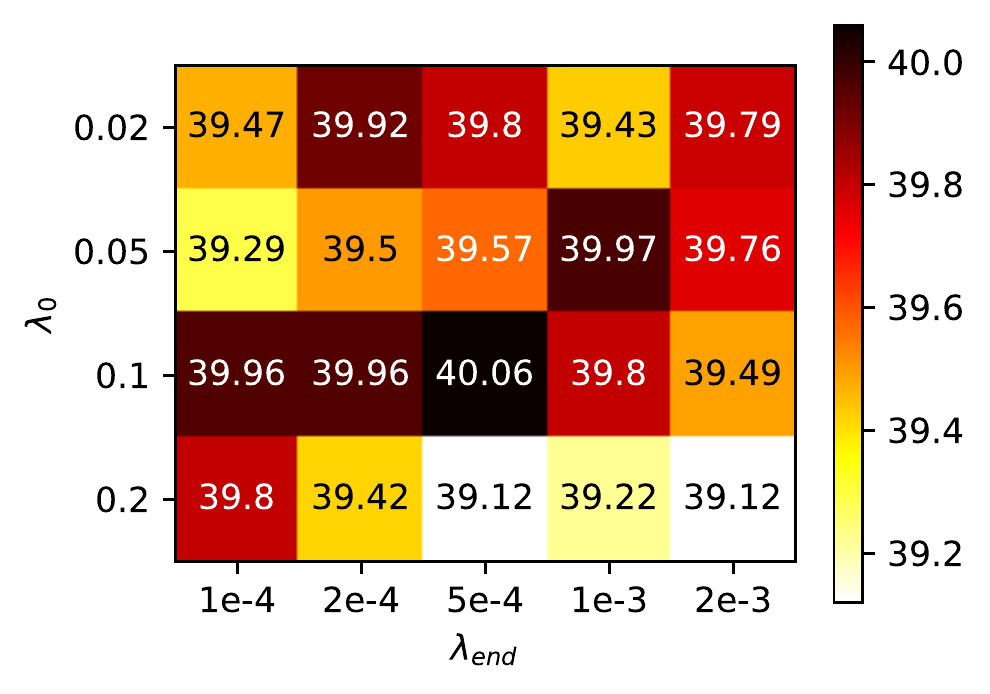} \\
        (a) & (b) & (c)
    \end{tabular}
    \vspace{-5pt}
    \caption{\small Experiments of $\ell_\infty$-distance net training on CIFAR-10 dataset $(\epsilon=8/255)$ using the proposed objective function (\ref{eq:loss}). (a) The percentage of training samples with output margin greater than $\theta$ (blue), less than 0 (orange), or between 0 and $\theta$ (green) throughout training. The dashed lines indicate different $p$ values of $\ell_p$-relaxation. (b) The final performance of the trained network using different hinge threshold $\theta$. (c) Heatmap of certified accuracy with different hyper-parameters $\lambda_0$ and $\lambda_{\text{end}}$. Each grid  shows the certified accuracy for a pair of hyper-parameters.}
    \label{fig:logit_margin_new}
    \vspace{-10pt}
\end{figure}

\label{sec_ablation}
We finally demonstrate by experiments that the proposed training strategy indeed addresses the optimization problem in Section \ref{sec_training_problem}. We first trace the output margin of the training samples throughout training on CIFAR-10 dataset ($\epsilon=8/255$), and plot the percentage of samples with a margin greater than $\theta$, less than 0, or between 0 and $\theta$, shown in Figure \ref{fig:logit_margin_new}(a). In contrast to Figure \ref{fig:logit_margin}(a), it can be seen that the loss does not degenerate in the whole training process. We then demonstrate in Figure \ref{fig:logit_margin_new}(b) that the accuracy curve regarding different choices of hinge threshold $\theta$ is well-behaved compared with Figure \ref{fig:logit_margin}(b). In particular, the best $\theta$ that maximizes the certified accuracy on training dataset approaches $2\epsilon$ (while for original hinge loss the value is $6\epsilon$). The peak certified accuracy on training dataset also improves by 10 points (see blue lines in the two figures). Such evidence clearly demonstrates the effectiveness of the proposed training strategy.

\textbf{Sensitivity analysis}. We perform sensitivity analysis on CIFAR-10 dataset ($\epsilon=8/255$) over hyper-parameters including the hinge threshold $\theta$ and the mixing coefficient $\lambda$. Results are shown in Figure \ref{fig:logit_margin_new}(b) and \ref{fig:logit_margin_new}(c). It can be seen that $(\mathrm{i})$ the certified accuracy is above 39\% for a wide range of $\theta$ (between $5\epsilon$ and $8\epsilon$); $(\mathrm{ii})$ among the 20 hyper-parameter combinations of $(\lambda_0,\lambda_{\text{end}})$, all certified accuracy results surpass 39\%, and half of the results can achieve a certified accuracy of more than 39.75\%. In summary, the performance is not sensitive to the choice of the hyper-parameters $\theta$ and $\lambda$. See Appendix \ref{sec_sensitivity} for more details on other hyper-parameters.

\textbf{Ablation studies}. We also conduct ablation experiments for the proposed loss function on CIFAR-10 dataset ($\epsilon=8/255$). Due to the space limit, we put results in Appendix \ref{sec_ablation_app}. In summary, both the cross-entropy loss and clipped hinge loss are crucial to boost the certified accuracy. Using a decaying mixing coefficient $\lambda$ can further improve the performance and stabilize the training.

\section{Related Work}
\label{sec_related}
In recent years substantial efforts have been taken to obtain robust classifiers. Existing approaches mainly fall into two categories: adversarial training and certified defenses.

\textbf{Adversarial training.} Adversarial training methods first leverage attack algorithms to generate adversarial examples of the inputs on the fly, then update the model's parameters using these perturbed inputs together with the original labels \citep{goodfellow2014explaining,kurakin2016adversarial,madry2017towards}. In particular, \citet{madry2017towards} suggested using Projected Gradient Descent (PGD) as the universal attacker to find a perturbation that maximizes the standard cross-entropy loss, which achieves decent empirical robustness. Some recent works considered other training objectives that combine cross-entropy loss and a carefully designed robust surrogate loss \citep{zhang2019theoretically,ding2019mma,wang2020Improving}, which show similarities to this paper. However, all methods above are evaluated empirically using first-order attacks such as PGD, and there is no formal guarantee whether the learned model is truly robust. This motivates researchers to study a new type of method called certified defenses, in which the prediction is guaranteed to remain the same under all allowed perturbations, thus provably resists against all potential attacks.

\textbf{Relaxation-based certified defenses.} These methods adopt convex relaxation to calculate a convex region containing all possible network outputs for a given input under perturbation 
\citep{wong2018provable,wong2018scaling,krishnamurthy2018dual,dvijotham2020efficient,raghunathan2018certified,raghunathan2018semidefinite,weng2018towards,singh2018fast,mirman2018differentiable,gehr2018ai2,wang2018efficient,wang2021beta,gowal2018effectiveness,zhang2018efficient,zhang2020towards,xiao2019training,croce2019provable,balunovic2020Adversarial,lee2020lipschitz,dathathri2020enabling,xu2020automatic,lyu2021towards,shi2021fast}. If all points in this region correspond to the correct prediction, then the network is provably robust. However, the relaxation procedure is usually complicated and computationally expensive. Furthermore, \citet{salman2019convex,mirman2021fundamental} indicated that there might be an inherent barrier to tight relaxation for a large class of convex relaxation approaches. This is also reflected in experiments, where the trained model often suffers from severe accuracy drop even on training data.

\textbf{Randomized smoothing for certified robustness}. Randomized smoothing provides another way to calculate a probabilistic certification under $\ell_2$ perturbations \citep{lecuyer2019certified,li2019certified,cohen2019certified,salman2019provably,zhai2019macer,jeong2020consistency,zhang2020black,yang2022on,horvath2022boosting}. For any classifier, if a Gaussian random noise is added to the input, the resulting ``smoothed'' classifier then possesses a certified guarantee under $\ell_2$ perturbations. Randomized smoothing has been scaled up to ImageNet and achieves state-of-the-art certified accuracy for $\ell_2$ perturbations. However, recent studies imply that it cannot achieve nontrivial certified accuracy for $\ell_{p}$ perturbations under $\epsilon=\Omega(d^{1/p-1/2})$ when $p>2$ which depends on the input dimension $d$ \citep{yang2020randomized,blum2020random,kumar2020curse,wu2021completing}. Therefore it is not suitable for $\ell_{\infty}$ perturbation scenario if $\epsilon$ is not very small.

\textbf{Lipschitz networks.} An even simpler way for certified robustness is to use Lipschitz networks, which directly possess margin-based certification. Earlier works in this area mainly regard the Lipschitz property as a kind of regularization and penalize (or constrain) the Lipschitz constant of a conventional ReLU network based on the spectral norms of its weight matrices \citep{cisse2017parseval,yoshida2017spectral,gouk2018regularisation,tsuzuku2018lipschitz,farnia2019generalizable,qian2018lnonexpansive,pauli2021training}. However, these methods either can not provide certified guarantees or provide a vanishingly small certified radius. \citet{anil2019sorting} figured out that current Lipschitz networks intrinsically lack expressivity to some simple Lipschitz functions, and designed the first Lipschitz-universal approximator called GroupSort network. \citet{li2019preventing,trockman2021orthogonalizing,singla2021skew} studied Lipschitz networks for convolutional architectures. Recent studies \citep{leino21gloro,singla2022improved}  achieved the state-of-the-art certified robustness using GroupSort network under $\ell_2$ perturbations. However, none of these approaches above can provide good certified results for $\ell_\infty$ robustness. The most relevant work to this paper is \citet{zhang2021towards}, in which the author designed a novel Lipschitz network with respect to $\ell_\infty$-norm. We show such architecture can establish new state-of-the-art results in the $\ell_\infty$ perturbation scenario.

\section{Conclusion}
In this paper, we demonstrate that a simple $\ell_\infty$-distance net suffices for good certified robustness from both theoretical and experimental perspectives. Theoretically, we prove the strong expressive power of $\ell_\infty$-distance nets in robust classification. Combining with \citet{mirman2021fundamental}, this result may indicate that $\ell_\infty$-distance nets have inherent advantages over conventional networks for certified robustness. Experimentally, despite simplicity, our approach yields a large gain over previous (possibly more complicated) certification approaches and the trained models establish new state-of-the-art certified robustness.

Despite these promising results, there are still many aspects that remain unexplored. Firstly, in the case when $\epsilon$ is very small, $\ell_\infty$-distance nets may have a lot of room for improvement.
Secondly, it is important to design better architectures suitable for image classification tasks than the simple fully connected network used in this paper. Finally, it might be interesting to design better optimization algorithms for $\ell_\infty$-distance nets to further handle the model's non-smoothness and gradient sparsity. We hope this work can make promising the study of $\ell_\infty$-distance nets, and more generally, the global Lipschitz architectures for certified robustness.

\newpage

\section*{Acknowledgement}
This work was supported by National Key R\&D Program of China (2018YFB1402600), BJNSF (L172037). Project 2020BD006 supported by
PKUBaidu Fund.

\bibliography{iclr2022_conference}
\bibliographystyle{iclr2022_conference}

\newpage

\appendix
\section{Proof of Theorem \ref{thm:robustness} and Beyond}
\label{sec_proof}
\begin{theorem}
Let $\mathcal D$ be a dataset with $n$ elements satisfying the $r$-separation condition with respect to $\ell_\infty$-norm. Then there exists a two-layer $\ell_\infty$-distance net with hidden size $n$, such that the certified $\ell_\infty$ robust accuracy is 100\% on $\mathcal D$ under perturbation $\epsilon=r$.
\end{theorem}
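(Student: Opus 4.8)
The plan is to prove the statement by an explicit construction, essentially the one in the proof sketch: I exhibit a two-layer $\ell_\infty$-distance net of width $n$, show that its output at each training point coincides with the negated per-class nearest-neighbour distance, and then deduce certified robustness from the $1$-Lipschitzness of $\ell_\infty$-distance nets (Proposition~\ref{lipshitz_property}) together with the margin criterion. Concretely I take $\vg$ of the form in Equation~(\ref{eq:def}) with a single hidden layer of $n$ neurons, set the hidden weights to the data themselves ($\vw^{(1,i)}=\vx_i$, $b^{(1)}_i=0$, so that $x^{(1)}_i=\|\vx-\vx_i\|_\infty$), and the output weights to scaled class indicators ($w^{(2,j)}_i=C\,\mathbb I(y_i=j)$, $b^{(2)}_j=-C$) with $C=4\max_{i\in[n]}\|\vx_i\|_\infty$. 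I may assume every label in $[K]$ appears in $\mathcal D$; a label with no representative only needs its output neuron given a sufficiently negative bias, which changes nothing below.

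The main step is the identity
\begin{equation*}
[\vg(\vx)]_j = -\min_{i\in[n],\,y_i=j}\|\vx-\vx_i\|_\infty
\end{equation*}
for every $\vx$ with $\|\vx\|_\infty\le\max_i\|\vx_i\|_\infty$, and in particular at every $\vx=\vx_k\in\mathcal D$. To prove it I expand $[\vg(\vx)]_j=\max_{i\in[n]}\bigl|x^{(1)}_i-C\,\mathbb I(y_i=j)\bigr|-C$ and split the maximum over the neurons with $y_i=j$ and those with $y_i\ne j$. On the stated ball one has $x^{(1)}_i\le\|\vx\|_\infty+\|\vx_i\|_\infty\le C/2$; hence for $y_i=j$ the inner absolute value unfolds with a fixed sign and the term equals $-x^{(1)}_i$, so the sub-maximum over this group is $-\min_{i:\,y_i=j}x^{(1)}_i\ge -C/2$, while for $y_i\ne j$ the term equals $x^{(1)}_i-C\le -C/2$ and is therefore dominated. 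Taking the overall maximum yields the identity. This is precisely where the numerical value of $C$ is used, and where I expect the only genuine work to be: choosing $C$ large enough that the inner absolute values have a determined sign and that the ``wrong-class'' neurons are uniformly beaten by the ``right-class'' ones on the relevant domain.

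With the identity in hand the rest is bookkeeping. At $\vx_k$ it gives $[\vg(\vx_k)]_{y_k}=0$ (attained at $i=k$), and for $j\ne y_k$ every $i$ with $y_i=j$ satisfies $y_i\ne y_k$, so $r$-separation yields $\|\vx_k-\vx_i\|_\infty>2r$ and hence $[\vg(\vx_k)]_j<-2r$. Consequently
\begin{equation*}
\mathsf{margin}(\vx_k,y_k;\vg)=\min_{i:\,y_i\ne y_k}\|\vx_k-\vx_i\|_\infty>2r,
\end{equation*}
so $\vx_k$ is classified correctly and, since $\vg$ is $1$-Lipschitz with respect to $\ell_\infty$, its prediction is provably unchanged on the $\ell_\infty$-ball of radius $\mathsf{margin}(\vx_k,y_k;\vg)/2>r=\epsilon$. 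Thus margin-based certification succeeds at every point of $\mathcal D$, giving $100\%$ certified $\ell_\infty$ robust accuracy under $\epsilon=r$, which is the theorem.

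For the multi-layer strengthening mentioned in the Remark ($L$ layers, each of width $O(n/L+K+d)$), the plan is the same in spirit: partition the $n$ anchors into $L$ batches of size $O(n/L)$, compute batch-wise per-class nearest-neighbour distances progressively across depth while routing the running minima and, where necessary, the $d$ input coordinates forward through $O(K+d)$ auxiliary $\ell_\infty$-distance neurons per layer, and merge everything in the last layer. Because each $\ell_\infty$-distance layer is again $1$-Lipschitz, the margin and certification argument of the previous paragraph carries over unchanged; the extra effort is in checking that a bounded-width $\ell_\infty$-distance layer can simultaneously forward coordinates and take coordinate-wise minima, which is the exact analogue of the $C$-calibration above.
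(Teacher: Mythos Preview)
Your proposal is correct and follows essentially the same approach as the paper: the identical construction with $\vw^{(1,i)}=\vx_i$, $w^{(2,j)}_i=C\,\mathbb I(y_i=j)$, $b^{(2)}_j=-C$ and $C=4\max_i\|\vx_i\|_\infty$, the same derivation of the nearest-neighbour identity via the bound $x^{(1)}_i\le C/2$ and the case split over $y_i=j$ versus $y_i\ne j$, and the same margin argument from $r$-separation. Your sketch of the multi-layer extension (batching anchors across depth and forwarding inputs plus running minima through $O(K+d)$ auxiliary neurons) also matches the paper's appendix construction.
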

\begin{proof}
Consider a two layer $\ell_\infty$-distance net $\vg$ defined in Equation (\ref{eq:def}). Let its parameters be assigned by
\begin{equation}
\label{eq:proof_6}
\begin{array}{*{20}{l}}
    &\vw^{(1,i)}=\vx_i,b^{(1)}_i=0 &\text{for }i\in [n]\\
    &w^{(2,j)}_i=C\cdot\mathbb I(y_i=j), b_j^{(2)}=-C\quad &\text{for }i\in [n],j\in [K]
\end{array}
\end{equation}
where $C= 4\max_{i\in [n]}\|\vx_i\|_\infty$ is a constant, and $\mathbb I(\cdot)$ is the indicator function. The chosen of $C$ is large enough so that the following holds:
\begin{align}
\label{eq:proof_0}
    &\|\vx_i-\vx_j\|\le C/2 \qquad \forall (\vx_i,y_i),(\vx_j,y_j)\in\mathcal D
\end{align}
For the above assignment, the first layer simply calculates the $\ell_\infty$-distance between $\vx$ and each sample in dataset $\mathcal D$. We now derive the output of the second layer. We have
\begin{align}
    \notag
    \left[\vg(\vx)\right]_j=x^{(2)}_j
    &=\|\vx^{(1)}-\vw^{(2,j)}\|_\infty+b^{(2,j)}\\
    \notag
    &=b^{(2,j)}+\max_{i\in[n]}|x^{(1)}_i-w^{(2,j)}_i|\\
    \notag
    &=-C+\max\left\{\max_{i\in[n],y_i=j}|x^{(1)}_i-C|,\max_{i\in[n],y_i\neq j}|x^{(1)}_i|\right\}\\
    &=-C+\max\left\{\max_{i\in[n],y_i=j}\left|\left\|\vx-\vx_i\right\|_\infty-C\right|,\max_{i\in[n],y_i\neq j}\|\vx-\vx_i\|_\infty\right\}\\
    \label{eq:proof_1}
    &=-C+\max_{i\in[n],y_i=j}(C-\|\vx-\vx_i\|_\infty)\\
    \label{eq:proof_3}
    &=-\min_{i\in[n],y_i=j}\|\vx-\vx_i\|_\infty.
\end{align}
Here a core step is Equation (\ref{eq:proof_1}) which follows by using Inequality (\ref{eq:proof_0}) when the image $\vx$ is in dataset $\mathcal D$.

From Equation (\ref{eq:proof_3}) the network $\vg$ can represent a nearest neighbor classifier, in that it outputs the negative of the nearest neighbor distance between input $\vx$ and the samples of each class. Therefore, given data $\vx=\vx_i$ in dataset $\mathcal D$, the output $\left[\vg(\vx)\right]_j$ is either 0 or less than $-2r$ depending on whether $j=y_i$, due to the $r$-separation condition. Therefore the output margin is at least $2r$. In other words, $\vg$ achieves 100\% certified robust accuracy on $\mathcal D$.
\end{proof}

We now give a general result which shows that any $L$ layer ($L\ge 2$) $\ell_\infty$-distance net with hidden size $O(n/L+K+d)$ can achieve perfect certified robustness. In this general setting, the total number of neurons in the network is thus $O(n+KL+dL)$ which is still close to real practice.

\begin{theorem}
Let $\mathcal D$ be a dataset with $n$ elements satisfying the $r$-separation condition with respect to $\ell_\infty$-norm. Then there exists an $L$-layer $\ell_\infty$-distance net with hidden size no more than $\lceil \frac n {L-1} \rceil +K+2d$ where $d$ is the input dimension, such that the certified $\ell_\infty$ robust accuracy is 100\% on $\mathcal D$ under perturbation $\epsilon=r$.
\end{theorem}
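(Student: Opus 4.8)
The plan is to generalize the two‑layer nearest‑neighbour construction of Theorem~\ref{thm:robustness} by \emph{pipelining} the $n$ distance computations over the $L-1$ hidden layers. Partition $\mathcal D$ into $L-1$ batches $\mathcal B_1,\dots,\mathcal B_{L-1}$, each of size at most $\lceil n/(L-1)\rceil$, and design hidden layer $\ell$ ($1\le \ell\le L-1$) to do three jobs: (i) with at most $\lceil n/(L-1)\rceil$ neurons, compute $\|\vx-\vx_i\|_\infty$ for every $\vx_i\in\mathcal B_\ell$, reading $\vx$ off a copy preserved by the previous layer (for $\ell=1$, off the input itself); (ii) with at most $2d$ neurons, forward a faithful copy of the input $\vx$; (iii) with at most $K$ neurons, maintain one running score per class equal to the negated minimum distance from $\vx$ to all dataset points processed so far. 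After layer $L-1$ every point has been seen, and the output layer $L$ folds in the last batch and emits
\begin{equation}
[\vg(\vx)]_c=-\min_{i:\,y_i=c}\|\vx-\vx_i\|_\infty,\qquad c\in[K],
\end{equation}
exactly the nearest‑neighbour classifier of Theorem~\ref{thm:robustness}. Then for $\vx=\vx_{i_0}\in\mathcal D$ with label $y$ one has $[\vg(\vx)]_y=0$ and $[\vg(\vx)]_c<-2r$ for $c\ne y$ by $r$‑separation, so $\mathsf{margin}(\vx,y;\vg)>2r$ and the point is certifiably robust at $\epsilon=r$; since $\vx$ was arbitrary in $\mathcal D$, the certified accuracy is $100\%$. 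The per‑layer neuron count is exactly $\lceil n/(L-1)\rceil+2d+K$.

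\textbf{The gadgets.} Each of (i)--(iii) is realized by a single $\ell_\infty$‑distance neuron with suitably large weights, exploiting two elementary facts: $\|\vz-\vw\|_\infty=\max_k|z_k-w_k|$ is a coordinate‑wise maximum, and $|z-C|=C-z$ whenever $z<C$. Copying a bounded coordinate is the degenerate case in which one coordinate is made to dominate all others by a large bias‑shift. For the running aggregation we reuse the mechanism from the proof of Theorem~\ref{thm:robustness}: giving the class‑$c$ aggregation neuron weight $C$ on exactly the distance coordinates $\{d_i:y_i=c\}$ of the incoming batch turns them into $\{C-d_i\}$, and the coordinate‑wise max (together with the previous score, also stored on the $C$‑scale) yields $C-\min(\text{old min},\,\min_i d_i)$; a bias $-C$ restores the negated running minimum. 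Storing the input copy with both signs (a $\pm$‑shifted version of each $x_k$) is what lets a later layer recover $|x_k-(\vx_i)_k|$ inside a $\max$, and also what keeps the copy on a scale that is not swamped; this is where the $2d$ comes from.

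\textbf{Main obstacle.} The real content of the proof is the \emph{global consistency of the dominance pattern}: one must choose the scaling constant(s) and all weights so that, for every layer, every neuron, and every input, the $\ell_\infty$‑maximum is attained at precisely the intended coordinates. The delicate point is that $r$‑separation is only a strict inequality, so the construction must be \emph{exact} --- in particular the distance from a training point to itself must evaluate to $0$ on the nose, which forces the carried ``score'' coordinates and the freshly computed ``distance'' coordinates to sit on scales that never interfere with one another. Making this hold simultaneously across all $L$ layers, while also correctly initializing the scores of classes that have not yet appeared in a processed batch (e.g.\ to the dataset diameter, which any genuine distance later dominates in the minimum) and handling the one‑step pipeline delay (the final batch is aggregated by layer $L$ itself), is where the bookkeeping concentrates; everything else is the routine verification sketched above together with the neuron count.
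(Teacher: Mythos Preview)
Your proposal is correct and follows essentially the same approach as the paper: partition the $n$ data points into $L-1$ batches, and in each hidden layer use $2d$ neurons to carry forward the input, $\lceil n/(L-1)\rceil$ neurons to compute the current batch's $\ell_\infty$-distances, and $K$ neurons to maintain the running per-class negated minimum, with the output layer performing the final aggregation to realize the nearest-neighbour classifier. The only cosmetic difference is that the paper stores two identical copies of $\vx$ and places the $\pm C$ shift in the weights of the distance neuron, whereas you describe storing a $\pm$-shifted copy directly; both realize the same $|x_k-(\vx_i)_k|$ recovery and are interchangeable.
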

\begin{proof}
The basic idea is to rearrange the computation process of the two-layer network in the above proof by order so as to satisfy the width constraint. To formulate the proof below, we first define some notations. Define $K$ prefix arrays  $h_j(j\in [K])$ as follows:
\begin{equation}
    h_{j,k}=-\min_{i\in[k],y_i=j}\|\vx-\vx_i\|_\infty.
\end{equation}
Note that we want the network output to be the negative of the nearest neighbor distance of all samples in a class $j$, i.e. $-\min_{i\in[n],y_i=j}\|\vx-\vx_i\|_\infty$, which corresponds to $h_{j,n}$. For any hidden layer $\vx^{(l)}$ ($l\in [L-1]$), we separate it into four sets: $\mathcal I^{(l)}=\{x^{(l)}_i:i\in[d]\}$, $\widetilde{\mathcal I}^{(l)}=\{x^{(l)}_i:d<i\le 2d\}$, $\mathcal O^{(l)}=\{x^{(l)}_i:2d<i\le 2d+K\}$ and  $\mathcal S^{(l)}$ containing the rest $\lceil \frac n {L-1} \rceil$ neurons. We also denote $\mathcal O^{(L)}=\{x^{(L)}_i:i\in[K]\}$ for ease of presentation.

We first make a construction in which the neurons of $\mathcal I^{(l)}$ in each layer exactly represent the input $\vx$, e.g. $x^{(l)}_i=x_i$ ($1\le i\le d$). This is feasible since an $\ell_\infty$-distance neuron can represent the operation that fetches an element of the neuron input on a bounded domain, e.g. the following operation
\begin{equation}
    u(\vz,\{\vw,b\})=\|\vz-\vw\|_\infty+b=z_j \quad \forall \vz\in\mathbb K
\end{equation}
if we assign $w_j=-C,b=-C,w_k=0 (k\neq j)$ where $C$ is larger than twice the diameter of domain $\mathbb K$. In this way, all hidden neurons in the network can have access to the network input $\vx$. We similarly let the neurons of $\widetilde{\mathcal I}^{(l)}$ represent the input $\vx$ again, e.g. $x^{(l)}_{i+d}=x_i$ ($d< i\le 2d$).

Next, we aim at designing the following computation pattern for $\mathcal O^{(l)}$:
\begin{equation}
\label{eq:proof_5}
    \mathcal O^{(l)}=\{h_{j,\lceil \frac {n(l-1)}{L-1}\rceil}:j\in K\}\quad \text{e.g.}\quad  x^{(l)}_{2d+j}=h_{j,\lceil \frac {n(l-1)}{L-1}\rceil} .
\end{equation}
In this way $\mathcal O^{(L)}=\left\{h_{j,n}:j\in[K]\right\}$, and the network exactly represents a nearest neighbor classifier which is desired. To represent $\mathcal O^{(l+1)}$ in (\ref{eq:proof_5}), we use the following recursive relation
\begin{equation}
    x^{(l+1)}_{2d+j}=h_{j,\lceil \frac {nl}{L-1}\rceil}=\max\left\{h_{j,\lceil \frac {n(l-1)}{L-1}\rceil}, \max_{\lceil \frac {n(l-1)}{L-1}\rceil<i\le \lceil \frac {nl}{L-1}\rceil,y_i=j}-\|\vx-\vx_i\|_\infty\right\}.
\end{equation}
Note that $h_{j,\lceil \frac {n(l-1)}{L-1}\rceil}=x^{(l)}_{2d+j}$ is already calculated in $\mathcal O^{(l)}$ in the previous layer. The left thing is to calculate $\|\vx-\vx_i\|_\infty$ for all $i\in \left\{\lceil \frac {n(l-1)}{L-1}\rceil+1,\cdots,\lceil \frac {nl}{L-1}\rceil\right\}$, which can be done by the neurons of the set $\mathcal S^{(l)}$ in the previous layer (which will be proven later). Assume $\mathcal S^{(l)}$ represents
\begin{equation}
\label{eq:proof_10}
    \mathcal S^{(l)}=\left\{x^{(l)}_{2d+K+i}:i\in\left[\lceil \tfrac{n}{L-1}\rceil\right]\right\},\quad x^{(l)}_{2d+K+i}=\left\|\vx-\vx_{\lceil \frac{n(l-1)}{L-1}\rceil+i}\right\|_\infty,
\end{equation}
then the neuron $x^{(l+1)}_{2d+j}$ merges the information of neuron $x^{(l)}_{2d+j}$ and part of neurons $x^{(l)}_{2d+K+i}$ in $\mathcal S^{(l)}$ depending on whether $y_i=j$, using the construction similar to (\ref{eq:proof_6}). In detail,
\begin{equation}
    x^{(l+1)}_{2d+j}=\|\vx^{(l)}-\vw^{(l+1,2d+j)}\|_\infty+b^{(l+1)}_{2d+j}=h_{j,\lceil \frac {nl}{L-1}\rceil}
\end{equation}
holds by assigning 
\begin{equation*}
\begin{array}{*{20}{l}}
    w^{(l+1,2d+j)}_k&=-C\cdot \mathbb I(k=2d+j) &\text{for }k\in [2d+K]\\
    w^{(l+1,2d+j)}_{2d+K+i}&=C\cdot \mathbb I(y_{\lceil\frac{n(l-1)}{L-1}\rceil+i}=j) &\text{for }i\in \left[\lceil\tfrac{n}{L-1}\rceil\right]\\
    b^{(l+1)}_{2d+j}&=-C
\end{array}
\end{equation*}
where $C$ is a sufficiently large constant.

Now it remains to represent $\mathcal S^{(l)}$ in (\ref{eq:proof_10}). We first consider the simplest case when $l=1$. In this case we can directly calculate $x^{(1)}_{2d+K+i}=\|\vx-\vx_i\|_\infty$ by assigning proper weights and zero bias. Now assume $l\ge 2$. In this case, we cannot calculate the $\ell_\infty$-distance directly since the previous layer has irrelavant neurons, e.g. the neurons in sets $\mathcal O^{(l-1)}$ and $\mathcal S^{(l-1)}$. We want to only use the sets of neurons $\mathcal I^{(l-1)}$ and $\widetilde{\mathcal I}^{(l-1)}$ in the previous layer.

Suppose the objective is to represent $\|\vx-\vx_i\|_\infty$ for some $i$. Note that 
\begin{equation*}
    \|\vx-\vx_i\|_\infty=\max_{k\in [d]}\max\{x_k-[\vx_i]_k,[\vx_i]_k-x_k\}.
\end{equation*}
We assign the parameters of the $\ell_\infty$-distance neuron $x^{(l)}_j=\|\vx^{(l-1)}-\vw^{(l,j)}\|_\infty+b^{(l)}_j$ for some $j$ as follows:
\begin{equation*}
\begin{array}{*{20}{l}}
    w^{(l,j)}_k&=[\vx_i]_k-C &\text{for }k\in [d]\\
    w^{(l,j)}_{d+k}&=[\vx_i]_k+C &\text{for }k\in [d]\\
    w^{(l,j)}_{2d+k}&=0 &\text{for }k\in \left[K+\lceil\tfrac{n}{L-1}\rceil\right]\\
    b^{(l)}_j&=-C
\end{array}
\end{equation*}
where $C$ is a sufficiently large constant. In this way
\begin{align*}
    x^{(l)}_j&=\|\vx^{(l-1)}-\vw^{(l,j)}\|_\infty+b^{(l)}_j\\
    &=b^{(l)}_j+\max\left\{\max_{k\in[d]}|x^{(l-1)}_k-w^{(l,j)}_k|,\max_{k\in[d]}|x^{(l-1)}_{d+k}-w^{(l,j)}_{d+k}|,\max_{k\in\left[K+\lceil\tfrac{n}{L-1}\rceil\right]}|x^{(l-1)}_{2d+k}-w^{(l,j)}_{2d+k}|\right\}\\
    &=-C+\max\left\{\max_{k\in[d]}(x^{(l-1)}_k-[\vx_i]_k+C),\max_{k\in[d]}(-x^{(l-1)}_{d+k}+[\vx_i]_{k}+C)\right\}\\
    &=\max\left\{\max_{k\in[d]}(x_k-[\vx_i]_k),\max_{k\in[d]}(-x_k+[\vx_i]_{k})\right\}\\
    &=\|\vx-\vx_i\|_\infty
\end{align*}
which is desired. Proof completes.
\end{proof}

\section{Additional Experimental Details and Hyper-parameters}
\label{sec_exp_detail}
Our experiments are implemented using the Pytorch framework. We run all experiments in this paper using a single NVIDIA Tesla-V100 GPU. The CUDA version is 11.2.

The learnable scalar in Equation (\ref{eq:loss}) is initialized to be one and trained using a smaller learning rate that is one-fifth of the base learning rate. This is mainly to make training stable as suggested in \citet{zhang2018fixup} since the scalar scales the whole network output. The final performance is not sensitive to the scalar learning rate as long as it is set to a small value. For random crop data augmentation, we use padding = 1 for MNIST and padding = 3 for CIFAR-10. The model is initialized using identity-map initialization (see Section 5.3 in \citet{zhang2021towards}), and mean-shift batch normalization is used for all intermediate layers. The training procedure is as follows:
\begin{itemize}
    \item In the first $e_1$ epochs, we set $p=8$ in $\ell_p$-relaxation and use $\lambda=\lambda_0$ as the mixing coefficient;
    \item In the next $e_2$ epochs, $p$ exponentially increases from 8 to 1000. Accordingly, $\lambda$ exponentially decreases from $\lambda_0$ to a vanishing small value $\lambda_{\text{end}}$;
    \item In the final $e_3$ epochs, $p$ is set to infinity and $\lambda$ is set to 0.
\end{itemize}
All hyper-parameters are provided in Table \ref{tbl:hyper-parameters}. Most hyper-parameters are directly borrow from \citet{zhang2021towards}, e.g. hyper-parameters of the optimizer, the batch size, and the value $p$ in $\ell_p$-relaxation. The only searched hyper-parameters are the hinge threshold $\theta$ and the mixing coefficient $\lambda_0,\lambda_{\text{end}}$. These hyper-parameters are obtained using a course grid search.

\begin{table}[ht]
\vspace{-5pt}
\caption{Hyper-parameters used in this paper.}
\label{tbl:hyper-parameters}
\vspace{-8pt}
\centering
\small
\begin{tabular}{c|ccccc}
\hline
Dataset                                    & \multicolumn{2}{c|}{MNIST}                                   & \multicolumn{3}{c}{CIFAR-10}                                    \\\hline
$\epsilon$                                 & 0.1                & \multicolumn{1}{c|}{0.3}                & 2/255                & 8/255               & 16/255             \\ \hline
Optimizer                                  & \multicolumn{5}{c}{Adam($\beta_1=0.9,\beta_2=0.99,\epsilon=10^{-10}$)}     \\
Learning rate                              & \multicolumn{5}{c}{0.03}                                                                                                       \\
Batch size                                 & \multicolumn{5}{c}{512}                                                                                                        \\
$p_{\text{start}}$                                 & \multicolumn{5}{c}{8}                                                                                                        \\
$p_{\text{end}}$                                 & \multicolumn{5}{c}{1000}                                                                                                        \\ \hline
Epochs                               & \multicolumn{2}{c|}{$e_1=25,e_2=375,e_3=50$}                                      & \multicolumn{3}{c}{$e_1=100,e_2=1150,e_3=50$}                                          \\
Total Epochs                               & \multicolumn{2}{c|}{450}                                     & \multicolumn{3}{c}{1300}                                        \\ \hline
Hinge threshold $\theta$                   & 0.6                & \multicolumn{1}{c|}{0.9}                & 20/255               & 48/255              & 80/255             \\
Mixing coefficient $\lambda_0$            & 0.05  & \multicolumn{1}{c|}{0.05}  & 0.05    & 0.1               & 0.1              \\
Mixing coefficient $\lambda_{\text{end}}$ & $2\times 10^{-4}$  & \multicolumn{1}{c|}{$2\times 10^{-4}$}  & $2\times 10^{-3}$    & $5\times 10^{-4}$   & $2\times 10^{-4}$  \\ \hline
\end{tabular}
\end{table}

We also run additional experiments using the training strategy in \citet{zhang2021towards} for performance comparison when the original paper does not present the corresponding results. This mainly includes the case $\eps=0.1$ on MNIST and $\epsilon=2/255,\epsilon=16/255$ on CIFAR-10, as shown in Table \ref{tbl:results}. We use the same hyper-parameters in \citet{zhang2021towards}, except for the hinge threshold $\theta$ where we perform a careful grid search. The choice of $\theta$ is listed in Table \ref{tbl:hinge_threshold}.

\begin{table}[t]
\caption{Best hinge thresholds for different settings using the training strategy in \citet{zhang2021towards}.}
\label{tbl:hinge_threshold}
\vspace{-8pt}
\centering
\small
\begin{tabular}{c|cc|ccc}
\hline
Dataset                  & \multicolumn{2}{c|}{MNIST} & \multicolumn{3}{c}{CIFAR-10} \\ \hline
$\epsilon$               & 0.1         & 0.3         & 2/255   & 8/255   & 16/255   \\
Hinge threshold $\theta$ & 0.8         & 0.9         & 32/255  & 80/255  & 128/255  \\ \hline
\end{tabular}
\end{table}

\begin{figure}[]
    \vspace{-5pt}
    \begin{minipage}{.5\linewidth}
    \centering
    \includegraphics[width=0.65\textwidth]{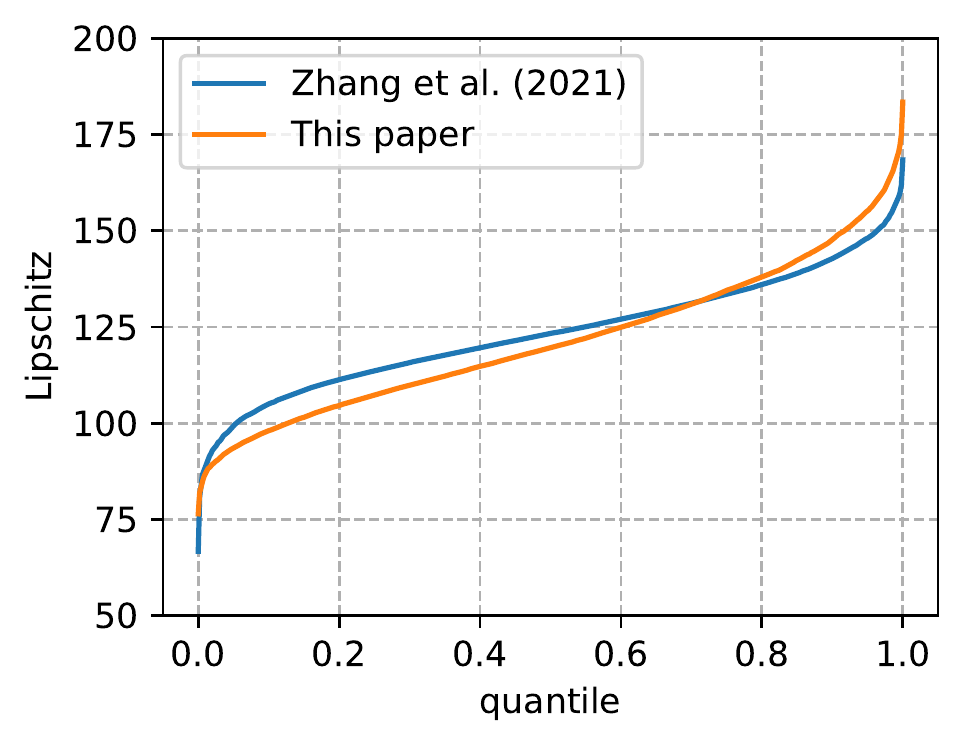}
    \end{minipage}
    \begin{minipage}{.5\linewidth}
    \centering
    \small
    \begin{tabular}{c|cc}
    
    \hline
        \multirow{2}{*}{Loss} & \multicolumn{2}{c}{Lipschitz}\\
         & Average & Max \\ \hline
        \citet{zhang2021towards} & 123.5 & 168.4\\
        This paper & 121.5 & 183.4\\ \hline
    \end{tabular}
    \end{minipage}
    \caption{Approximating the Lipschitz constant of  $\ell_p$-distance net when $p=8$, trained using different loss functions on CIFAR-10 dataset. The left figure plots the calculated value of (\ref{eq:pgd_approx}) over the test set at each quantile. The right table provides the statistical information.}
    \vspace{-5pt}
    \label{fig:lipschitz}
\end{figure}

\section{The Lipschitz Constant of $\ell_p$-distance Net}
\label{sec_lip}
We have shown in Section \ref{sec_training_problem} that an $L$ layer $\ell_p$-distance net with $d$ neurons in each hidden layer is $d^{L/p}$ Lipschitz with respect to $\ell_\infty$-norm. The value becomes quite large if $p$ is small. For example, \citet{zhang2021towards} uses a 6-layer $\ell_p$-dist net with $d=5120$. This gives a Lipschitz constant of approximate 568 when $p=8$ at the beginning of training.

One may ask whether such \emph{upper bound} of Lipschitz constant (Proposition \ref{ell_p_lipschitz}) is tight and reflects the true Lipschitz property in practice. To validate the tightness of the bound, we run the following experiments. Consider the $\ell_\infty$-distance net used in \citet{zhang2021towards}. We train this architecture following the training strategy either in \citet{zhang2021towards} or in this paper. After training finishes, we then set $p=8$ without changing the model parameters.
We approximate the Lipschitz constant of the model using Projected Gradient Descent (PGD), which provides a lower bound estimate. In detail, for each image $\vx$ in the test dataset, we estimate the quantity
\begin{equation}
\label{eq:pgd_approx}
    \frac{1}{\epsilon}\max_{\|\boldsymbol \delta\|\le \epsilon}\|\vg(\vx+\boldsymbol\delta)-\vg(\vx)\|_\infty
\end{equation}
where $\vg$ is the network and $\epsilon$ is a small constant taken to be 1/255. The expression (\ref{eq:pgd_approx}) is clearly a lower bound of the Lipschitz constant (can be seen as the ``local Lipschitz constant'' near point $\vx$). It can be further lower bounded by using the PGD solution $\boldsymbol\delta=\boldsymbol\delta_{\text{PGD}}$. We run PGD for each target label $j$ and optimize
\begin{equation}
    \frac{1}{\epsilon}\max_{\|\boldsymbol \delta\|\le \epsilon}\left|[\vg(\vx+\boldsymbol\delta)]_j-[\vg(\vx)]_j\right|
\end{equation}
using 20 PGD steps with step size $\epsilon/4$.

Results are shown in Figure \ref{fig:lipschitz}. It can be observed that the ``local Lipschitz constant'' around \emph{real data points} is indeed far larger than one. The average value exceeds 100 which is close to the theoretical upper bound.

\section{Randomized Smoothing for $\ell_\infty$ Perturbations}
\label{sec_rs_result}

Randomized smoothing approaches typically provide probabilistic certified guarantees for $\ell_2$ perturbations. To apply these methods in the $\ell_\infty$ perturbation scenario, most of works convert the result of $\ell_2$ perturbation into $\ell_\infty$ perturbation using norm inequalities \citep{salman2019provably,blum2020random}. Specifically, to certify the robustness under $\epsilon$-bounded $\ell_\infty$ perturbations, one can certify the robustness under $(\epsilon \sqrt d)$-bounded $\ell_2$ perturbations to obtain a lower bound estimate where $d$ is the input dimension. On CIFAR-10 dataset, the input dimension $d=3072$. This corresponds to an $\ell_2$ perturbation radius $\epsilon=0.4347$ for $\ell_\infty$ perturbation radius $\epsilon=2/255$, and corresponds to an $\ell_2$ perturbation radius $\epsilon=1.739$ for $\ell_\infty$ perturbation radius $\epsilon=8/255$.

For the case $\epsilon=2/255$, \citet{blum2020random} directly reported a certified accuracy of 62.6\% using randomized smoothing which is currently state-of-the-art. For the case $\epsilon=8/255$, there are no literature results that directly report $\ell_\infty$ robustness, so we use the results of $\ell_2$ robustness from representative papers \citep{salman2019provably,jeong2020consistency}. \citet{salman2019provably} reported a certified accuracy of 24\% and a clean accuracy of 53\% under $\ell_2$ perturbation $\epsilon=1.75$ (Table 17 in their paper). \citet{jeong2020consistency} reported a certified accuracy of 25.2\% and a clean accuracy of 52.3\% under $\ell_2$ perturbation $\epsilon=1.75$ (Table 1 in their paper, $\sigma=0.5$). For the case $\epsilon=16/255$, all randomized smoothing methods fail and only achieve a trivial certified accuracy of 10\%.

\section{Ablation Studies}
\label{sec_ablation_app}
In this section we conduct ablation experiments to the proposed loss. Let a training sample be $(\vx,y)$ where $y$ is the label of $\vx$, and denote $\vg(\vx)$ as the output of an $\ell_\infty$-distance net for input $\vx$. Let $\ell_{\text{hinge}}(\vz,y)=\max\{\max_{i\neq y} z_i-z_y+1, 0\}$ and $\ell_{\text{CE}}(\vz,y)=\log(\sum_i\exp(z_i))-z_y$ represent the hinge loss and cross-entropy loss, respectively. We would like to justify that $(\rm{i})$ Cross-entropy loss can alleviate the optimization issue in $\ell_p$-relaxation (which is a better substitute over hinge loss), but the threshold of hinge loss is also crucial as it explicitly optimizes the certified accuracy; $(\rm{ii})$ Combining cross-entropy loss and clipped hinge loss leads to a much better performance; $(\rm{ii})$ Using a decaying mixing coefficient $\lambda$ can further boost the performance and stabilize the training.

We consider the following objective functions:
\begin{enumerate}[label=(\arabic*)]
    \item The baseline hinge loss: $\ell_{\text{hinge}}(\vg(\vx)/\theta,y)$ with hinge threshold $\theta$. This loss is used in \citet{zhang2021towards}.
    \item The cross-entropy loss: $\ell_{\text{CE}}(s\cdot \vg(\vx),y)$ where $s$ is a scalar (temperature). Note that the information of the allowed perturbation radius $\epsilon$ is not encoded in the loss, and the loss only coarsely enlarges the output margin (see Section \ref{sec_loss}). Therefore it may not achieve desired certified robustness.
    \item A variant of cross-entropy loss with threshold: $\ell_{\text{CE}}(s\cdot \vg(\vx-\theta \mathbf 1_y),y)$ where $s$ is a scalar (temperature), $\theta$ is the threshold hyper-parameter and $\mathbf 1_y$ is the one-hot vector with the $y$th element being one. Intuitively speaking, we subtract the $y$th output logit by $\theta$ before taking cross-entropy loss. Compared to the above loss (2), now the information $\epsilon$ is encoded in the threshold hyper-parameter $\theta$. We point out that this loss can be seen as a smooth approximation of the hinge loss.
    \item The combination of cross-entropy loss and clipped hinge loss: $\lambda \ell_{\text{CE}}(s\cdot \vg(\vx),y)+\min(\ell_{\text{hinge}}(\vg(\vx)/\theta,y),1)$ with a fixed mixing coefficient $\lambda$. 
    \item The combination of cross-entropy loss and clipped hinge loss: $\lambda \ell_{\text{CE}}(s\cdot \vg(\vx),y)+\min(\ell_{\text{hinge}}(\vg(\vx)/\theta,y),1)$ with a decaying $\lambda$. The loss is used in this paper.
\end{enumerate}
We keep the training procedure the same for the different objective functions above. The hyper-parameters such as $\theta$ and $\lambda$ are independently tuned for each objective function to achieve the best certified accuracy. The scalar $s$ is a learnable parameter in each loss except for objective function (2) where we tune the value of $s$. For other hyper-parameters, we use the values in Table \ref{tbl:hyper-parameters}. We independently run 5 experiments for each setting and the median of the performance is reported. Results are listed in Table \ref{tbl:ablation}, and the bracket in Table \ref{tbl:ablation}(b) shows the standard deviation over 5 runs.

\vspace{10pt}
\begin{table}[htb]
    \caption{Performance of ablation studies ($\epsilon=8/255$ on CIFAR-10).}
    \label{tbl:ablation}
    \vspace{-8pt}
    \small
    \begin{subtable}[t]{.6\textwidth}
        \centering
        \caption{Performance of different objective functions with best \\hyper-parameters.}
        \begin{tabular}{cccc}
            \hline
            Loss & Clean & Certified & Hyper-parameters \\ \hline
            (1) & 56.80 & 33.30 & $\theta=80/255$ \\
            (2) & 55.58 & 33.23 & $s=1.0$ \\
            (3) & 53.37 & 34.91 & $\theta=32/255$ \\
            (4) & 53.51 & 39.24 & $\theta=48/255$, $\lambda=0.02$\\
            (5) & 54.30 & \textbf{40.06} & $\theta=48/255$, $\lambda=0.1\to 0$\\\hline
        \end{tabular}
    \end{subtable}%
   \begin{subtable}[t]{.4\textwidth}
        \centering
        \caption{Performance using objective function (4) \\with different mixing coefficient $\lambda$.}
        \vspace{-3pt}
        \begin{tabular}{cccc}
            \hline
            $\lambda$ & Clean & Certified \\ \hline
            0.1   & 58.99(±0.35) & 37.67(±0.25) \\
            0.05  & 56.50(±0.25) & 38.82(±0.14) \\
            0.02  & 53.51(±0.40) & \textbf{39.24}(±0.37) \\
            0.01  & 50.48(±1.83) & 38.05(±1.31) \\
            0.005 & 47.51(±5.03) & 37.03(±2.64)	\\
            0     & 10.0         & 10.0         \\ \hline
        \end{tabular}
    \end{subtable}
\end{table}

We can draw the following conclusions from Table \ref{tbl:ablation}:
\begin{itemize}
    \item Hinge loss and cross-entropy loss are complementary. Cross-entropy is better in the early training phase when the Lipschitz constant is large, while hinge loss is better for certified robustness when the model is almost 1-Lipschitz in the later training phase. This can be seen from the results of objective functions (1-3) in Table \ref{tbl:ablation}(a), where (3) incorporates cross-entropy loss and the threshold in hinge loss, and outperforms both (1) and (2) by a comparable margin.
    \item Combining cross-entropy loss and clipped hinge loss leads to much better performance. This can be seen from the result of the objective function (4), which significantly outperforms (1-3). However, this loss is very sensitive to the hyper-parameter $\lambda$ as is demonstrated in Table \ref{tbl:ablation}(b). If $\lambda$ is too large, the certified accuracy gets worse. If $\lambda$ is too small, the training becomes unstable and the clean accuracy drops significantly. In the extreme case when $\lambda=0$, the loss (4) reduces to the clipped hinge loss and the optimization fails because clipped hinge loss does not optimize for wrongly-classified samples.
    \item Using a decaying mixing coefficient $\lambda$ can further boost the performance and stabilize the training. In contrast to the loss (4), we will show in Appendix \ref{sec_sensitivity} that the proposed objective function (5) in this paper is not sensitive to hyper-parameter $\lambda$.
\end{itemize}

\section{Sensitivity Analysis}
\label{sec_sensitivity}
In this section we provide sensitive analysis of the proposed objective function (\ref{eq:loss}) with respect to hyper-parameters. We consider the setting $\epsilon=8/255$ on CIFAR-10 dataset. For each choice of hyper-parameters, we independently run 3 experiments and the median of the certified accuracy is reported.

\textbf{The hinge threshold $\theta$}. The results are already plotted in Figure \ref{fig:logit_margin_new}(b). We list the concrete numbers below.

\begin{table}[ht]
\caption{Sensitive Analysis over hyper-parameter $\theta$.}
\label{tbl:sensitive}
\vspace{-8pt}
\centering
\small
\begin{tabular}{c|cccccccc}
\hline
$\theta$    & $3\epsilon$ & $4\epsilon$ & $5\epsilon$ & $6\epsilon$ & $7\epsilon$ & $8\epsilon$ & $9\epsilon$ & $10\epsilon$\\ \hline
Certified   & 35.23 & 38.47 & 39.55 & 40.06 & 39.46 & 39.05 & 38.68 & 38.31  \\ \hline
\end{tabular}
\vspace{-5pt}
\end{table}

\textbf{The mixing coefficients $\lambda_0$ and $\lambda_{\text{end}}$}. The results are already shown in Figure \ref{fig:logit_margin_new}(c).

\textbf{The number of epochs}. Our best result reported in Table \ref{tbl:results} is trained for 1300 epochs, which is longer than \citet{zhang2021towards}. We also consider using the same training budget by setting $e_1=100,e_2=650,e_3=50$ in Table \ref{tbl:hyper-parameters}. This yields a total of 800 training epochs. In this way we can achieve 54.52 clean accuracy and 39.61 certified accuracy.

\textbf{Adam hyper-parameters}. While we use the same Adam hyper-parameters as \citet{zhang2021towards}, note that the values are different from the default numbers in Pytorch (e,g. $\beta_2=0.999$ and $\epsilon=10^{-8}$). Instead, we use $\beta_2=0.99$ and $\epsilon=10^{-10}$ in all experiments.

For $\epsilon$, we find the value is essential to be small, because for the $\ell_p$-distance function, the gradients of most of the elements are close to zero when $p$ is large. For Adam optimizer \citep{KingmaB14}, the update is written as
\begin{equation}
    w_{t+1}=w_t-lr\cdot \frac{m_t}{\sqrt{v_t}+\epsilon}.
\end{equation}
Then a large $\epsilon$ will dominate the denominator in Adam if $v_t$ is close to zero, which severely weakens the parameters' update and leads to worse performance. We find the value of $\epsilon$ is not sensitive if it is small enough, i.e. we can obtain almost identical performance for $\epsilon=10^{-10}$, $10^{-11}$ or $10^{-12}$.

The use of a smaller $\beta_2$ is mainly because of the $\ell_p$ relaxation. Since $p$ increases exponentially during training, the network function changes through time, therefore the long-time-ago historical gradient information will become meaningless and even do harm to the training. This is why a smaller $\beta_2$ is used, so that the second-order momentum (the term $v_t$) in Adam only depends on recent gradient information. We find it is also OK to choice $\beta_2=$0.98 or 0.995, and the certified accuracy can reach 39.5\%. However, the default value of 0.999 is too large, as it corresponds to 10 times more historical information than the value of 0.99.

\end{document}